\documentclass[runningheads]{llncs}
\usepackage[T1]{fontenc}

\usepackage{graphicx}
\usepackage{subcaption}

\usepackage{amsmath}
\usepackage{amssymb}
\usepackage{bm}
\usepackage{mathtools}

\usepackage{hyperref}
\usepackage{cleveref}
\usepackage{color}

\usepackage{listings}

\newcommand{\Prod}[3]{\left({#2}, {#3}\right)_{#1}}
\newcommand{\Prodh}[2]{\Prod{h}{#1}{#2}}
\newcommand{\Proddh}[2]{\Prod{\nabla, h}{#1}{#2}}

\newcommand{\Norm}[2]{\left\|{#2}\right\|_{#1}}
\newcommand{\Normh}[1]{\Norm{h}{#1}}
\newcommand{\Normdh}[1]{\Norm{\nabla, h}{#1}}

\newcommand{\Dp}{\nabla_{+}}
\newcommand{\Dm}{\nabla_{-}}
\newcommand{\Dxp}{\nabla_{x+}}
\newcommand{\Dxm}{\nabla_{x-}}
\newcommand{\Dyp}{\nabla_{y+}}
\newcommand{\Dym}{\nabla_{y-}}

\newcommand{\Dh}{D_h}
\newcommand{\Dzh}{D_{0,h}}
\newcommand{\Dph}{D^p_h}

\newcommand{\uh}{\bm{u}_h}
\newcommand{\vh}{\bm{v}_h}
\newcommand{\wh}{\bm{w}_h}
\newcommand{\sh}{\bm{\sigma}_h}
\renewcommand{\th}{\bm{\tau}_h}
\newcommand{\ph}{p_h}
\newcommand{\qh}{q_h}
\newcommand{\Uh}{\mathcal{U}_h}
\newcommand{\Vh}{\mathcal{V}_h}

\newcommand{\vhf}{\vh^{\bm{f}}}
\newcommand{\thf}{\th^{\bm{f}}}
\newcommand{\qhf}{\qh^{\bm{f}}}

\newcommand{\vhh}{\vh^{\bm{h}}}
\newcommand{\thh}{\th^{\bm{h}}}
\newcommand{\qhh}{\qh^{\bm{h}}}

\newcommand{\vhg}{\vh^{g}}
\newcommand{\thg}{\th^{g}}
\newcommand{\qhg}{\qh^{g}}

\newcommand{\Tp}[1]{\tau_{#1}^{+}}
\newcommand{\Tm}[1]{\tau_{#1}^{-}}
\newcommand{\Txp}{\Tp{x}}

\newcommand{\Txm}{\Tm{x}}

\begin{document}
\title{%
Python library supporting Discrete Variational Formulations and training solutions with Collocation-based Robust Variational Physics Informed Neural Networks (DVF-CRVPINN)
}
\titlerunning{Python library supporting discrete weak formulations with CRVPINN}
%
\author{%
Tomasz S\l{}u\.zalec\inst{1}\orcidID{0000-0001-6217-4274} \and
Marcin \L{}o\'s\inst{1}\orcidID{0000-0002-8426-6345} \and \\
Askold Vilkha\inst{1}\orcidID{0000-0001-9272-9082} \and
Maciej Paszy\'nski\inst{1}\orcidID{0000-0001-7766-6052}
}

\authorrunning{T. S\l{}u\.zalec et al.}

\institute{%
AGH University of Krakow, Poland \\
\email{\{los,sluzalec,maciej.paszynski\}@agh.edu.pl, askoldvilkha@gmail.com} }

\maketitle              
\begin{abstract}
We explore the possibility of solving Partial Differential Equations (PDEs) using discrete weak formulations. We propose a programming environment for defining a discrete computational domain, introducing discrete functions defined over a set of points, constructing discrete inner products, and introducing discrete weak formulations employing Kronecker delta test functions.
Building on this setup, we propose a discrete neural network representation, training the solution function defined over a discrete set of points and employing discrete finite difference derivatives in the automatic differentiation procedures. 
As a challenging computational model example, we focus on Stokes equations in two-dimensions, defined over a discrete set of points. We train the solution using the discrete weak residual and the Adamax algorithm with discrete automatic differentiation of the discrete gradients. Despite introducing the python environment, we also provide a rigorous mathematical formulation based on discrete weak formulations, proving the well-posedness and robustness of the loss function. The solution of the discrete weak formulations is based on neural network training employing a robust loss function that is related to the true error. In this way, we have a robust control of the numerical error during the training of the neural networks. Besides the Stokes formulation, we also explain the functionality of the proposed library using the Laplace problem formulation.
\keywords{%
Python library \and Robust Variational
Physics-Informed Neural Networks \and
Collocation Methods \and
Robust loss \and
Stokes Equations \and Laplace problem
}
\end{abstract}

\section{Introduction}

The rapid progress of deep learning over the last decade has had a profound impact on computational science, including the numerical solution of Partial Differential Equations (PDEs) (see, e.g., \cite{hinton2012deep}, \cite{10.1007/978-3-031-36021-3_52}, \cite{krizhevsky2017imagenet}, \cite{gheisari2017survey}). Among the most influential developments are Physics-Informed Neural Networks (PINNs) [\cite{raissi2019physics}], which embed physical constraints into neural network training by minimizing strong-form residuals evaluated at collocation points. 

PINNs have been applied to a wide range of problems, including fluid dynamics and Navier–Stokes equations \cite{cai2021physics}, \cite{mao2020physics}, \cite{WOS:000386452000009}, \cite{sun2020surrogate}, \cite{Wandel_Weinmann_Neidlin_Klein_2022}, wave propagation \cite{rasht2022physics}, \cite{maczuga2023influence}, \cite{WOS:000503737000017}, phase-field models \cite{WOS:000519657500012}, biomechanics \cite{WOS:000503006300001}, \cite{WOS:000496915700004}, quantum mechanics \cite{9891944}], electrical engineering \cite{9632308}, and inverse and uncertainty-aware problems \cite{IJCAI2},\cite{yang2019adversarial},\cite{WOS:000526518300074},\cite{Kim_Lee_Lee_Jhin_Park_2021},\cite{mishra2022estimates}. 

Variational Physics-Informed Neural Networks (VPINNs) \cite{kharazmi2019variational} were introduced as a weak-form alternative to classical PINNs, offering improved stability and accuracy. By constructing loss functions based on variational formulations, VPINNs have shown promising results for elliptic, parabolic, and transport-dominated problems \cite{kharazmi2021hp}, \cite{WOS:001021584500003}, \cite{WOS:001023419800001}, as well as inverse problems \cite{WOS:001047169000001}, \cite{WOS:001096752500001}. Nevertheless, the original VPINN formulation lacks robustness at the discrete level.

To address this issue, Robust VPINNs (RVPINNs) were proposed in \cite{rojas2024robust}, introducing Gram-matrix-weighted norms that ensure stability and consistency between discrete and continuous formulations. Despite their theoretical appeal, existing RVPINN implementations rely on numerical integration and dense linear algebra, leading to prohibitive computational costs and memory requirements.

The robust loss is defined as a Riesz representative of the residual, which practical implementation involves the inverse of the Gram matrix, namely,
\begin{equation}
\begin{aligned}
    \mathcal{L}(\theta) = \Norm{V'}{Au_\theta - f}^2 
    =R_\theta^T G R_\theta 
    =\text{RES}\,(\theta)^T G^{-1}\, \text{RES}(\theta)
\end{aligned}
\end{equation}
where
$R_\theta = G^{-1}\,\text{RES}(\theta)$ is a vector
    representing  $v \mapsto \Prodh{Au - f}{v}$, $\text{RES}(\theta)_{i} = \Prodh{Au - f}{v_i}$ is the weak residual vector,
    where $v_i$ is the~$i$-th basis function of~$V$, following the VPINN formulation.
This loss is robust, which means that the true error (the difference between the neural network solution and the exact solution) is bounded by the square root of the robust loss, up to the continuity and coercivity constants

\begin{equation}
    \frac{1}{\mu} \sqrt{\mathcal{L}(\theta)}
    \leq
    \Normh{u_\theta - u_\text{exact}}
    \leq
    \frac{1}{\alpha} \sqrt{\mathcal{L}(\theta)}.
\end{equation}

As a precursor to the present study, the collocation-based robust variational framework was introduced and validated in two spatial dimensions in \cite{los_collocation-based_2025}. In that work, the CRVPINN methodology combined point-collocation with a Gram-matrix-weighted loss function and an LU-based solver strategy, leading to a substantial reduction in computational cost compared to classical RVPINNs while preserving robustness. The approach was successfully demonstrated on a range of two-dimensional problems, including Laplace, advection–diffusion, Stokes, stationary Navier–Stokes, and linear elasticity equations, showing that the collocation-based formulation provides an efficient and stable alternative to integration-based variational PINNs.

In this paper, we introduce a programming framework for defining a discrete computational domain, incorporating discrete functions over a set of points, constructing discrete inner products, and formulating discrete weak problems using Kronecker delta test functions.
Building on this foundation, we develop a discrete neural network representation in which the solution function is defined over discrete points and trained using finite-difference-based derivatives within automatic differentiation.

As a challenging test case, we consider the two-dimensional Stokes equations on a discrete point set. The solution is obtained by minimizing a discrete weak residual using the Adam optimization algorithm, with gradients computed through discrete automatic differentiation.

In addition to the Python implementation, we provide a rigorous mathematical framework based on discrete weak formulations, establishing the well-posedness and robustness of the loss function. The neural network is trained using a loss function closely related to the true error, ensuring reliable control of numerical accuracy throughout training.

\begin{figure}
    \includegraphics[width=0.45\textwidth]{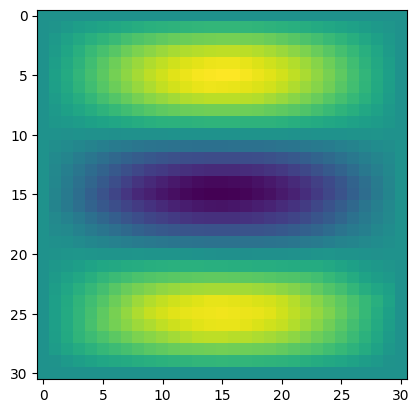}
    \includegraphics[width=0.55\textwidth]{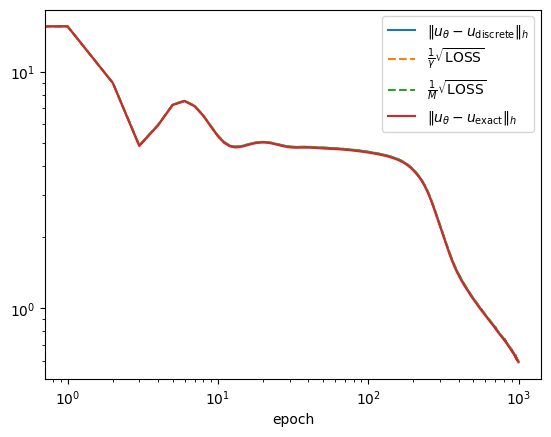}
    \caption{The solution of the Laplace problem with manufactured solution solved using our framework on  $30 \times 30$ grid. The Laplace problem has a continuity and coercivity constants equal to one, thus the convergence of trainig reviles the robust loss (equal to the true error). }
    \label{fig:pinn30x30}
\end{figure}

\section{Mathematical framework}

Following~\cite{los_collocation-based_2025},
we consider the 2D unit square~$\Omega = (0, 1)^2$
and a set of uniformly distributed collocation points
\begin{equation}
\Omega_{h} \coloneq \{ (ih, jh) : 0 \leq i, j \leq N \},
\end{equation}
where~$h = 1/N$ is the point spacing for some fixed grid size~$N$.
Our discrete formulations will be built using subspaces of the space
of real functions on~$\Omega_h$:
\begin{equation}
    D_h \coloneq \{ u \colon \Omega_h \to \mathbb{R} \} \cong \mathbb{R}^{(N + 1)^2}
\end{equation}
which we endow with a scalar product and the induced norm:
\begin{equation}
    \Prodh{u}{v} \coloneq h^2 \sum_{p \in \Omega_h} u(p)v(p),
    \qquad
    \Normh{u}^2 \coloneq \Prodh{u}{u}.
\end{equation}
Continuous derivatives will be approximated using finite difference operators
given by
\begin{equation}
\begin{aligned}
    \Dp u_{i,j} \coloneq & \left( \Dxp u_{i,j}, \Dyp u_{i,j} \right)
    \coloneq \left( \frac{u_{i+1, j} - u_{i,j}}{h}, \frac{u_{i, j+1} - u_{i,j}}{h}\right) \\
    \Dm u_{i,j} \coloneq & \left( \Dxm u_{i,j}, \Dym u_{i,j} \right)
    \coloneq \left( \frac{u_{i, j} - u_{i - 1,j}}{h}, \frac{u_{i, j} - u_{i,j - 1}}{h}\right)
\end{aligned}
\end{equation}
for these indices~$(i, j)$ where the right-hand side is defined,
and zero otherwise.
Here, $u_{i,j}$ is a shorthand for~$u(ih, jh)$.
Using these, we can define a degenerate scalar product and a corresponding seminorm:
\begin{equation}
    \Proddh{u}{v} \coloneq \Prodh{\Dxp u}{\Dxp v} + \Prodh{\Dyp u}{\Dyp v},
    \qquad
    \Normdh{u}^2 \coloneq \Proddh{u}{u}.
\end{equation}
When restricted to the subspace
\begin{equation}
    \Dzh \coloneq \left\{ u \in \Dh : \left.u\right|_{\partial\Omega_h} = 0 \right\},
    \quad
    \partial\Omega_h \coloneq \partial\Omega \cap \Omega_h
\end{equation}
of functions that vanish on the discrete boundary points,
these two become a scalar product and a norm, respectively.
The standard discrete Laplace operator
\begin{equation}
\Delta u_{i,j} \approx \frac{1}{h^2}\left(u_{i,j+1} + u_{i,j-1} + u_{i+1,j} + u_{i-1,j} -4u_{i,j}\right)
\end{equation}
corresponds to~$\Delta_h u = \Dp \cdot(\Dm u) = \Dm \cdot(\Dp u)$.


Let~$\tau_\alpha^{\pm}$ denote the shift operator in direction~$\alpha$,
that is
\begin{equation}
    \left(\Txp u\right)_{ij} =
    \begin{cases}
    u_{i+1,j} & \text{for }i < N \\
    0 & \text{for }i = N
    \end{cases}
    \quad
    \left(\Txm u\right)_{ij} = 
    \begin{cases}
    u_{i-1,j} & \text{for }i > 0 \\
    0 & \text{for }i = 0
    \end{cases},
\end{equation}
and similarly for~$\tau_y^\pm$ and~$\tau_z^\pm$.

\begin{lemma}
\label{lem:shifts-are-adjoin}
Shift operators~$\Tp{\alpha}$ and~$\Tm{\alpha}$ are adjoint,
that is~$\Prodh{\Tp{\alpha} u}{v} = \Prodh{u}{\Tm{\alpha} v}$
for all~$u, v \in \Dh$.
\end{lemma}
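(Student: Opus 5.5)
The plan is a direct computation with the definition of $\Prodh{\cdot}{\cdot}$, followed by a reindexing of the sum. By symmetry of the roles of the coordinates, it suffices to treat $\alpha = x$. The cases $\alpha = y$ (and $\alpha=z$ in three dimensions) are identical, with the index $j$ (or the third index) playing the role of $i$.

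First we write out the left-hand side. By the definition of $\Txp$, the term with $i=N$ vanishes, so
\begin{equation*}
    \Prodh{\Txp u}{v} = h^2 \sum_{j=0}^{N} \sum_{i=0}^{N-1} u_{i+1,j}\, v_{i,j}.
\end{equation*}
Similarly, the right-hand side loses its term with $i=0$:
\begin{equation*}
    \Prodh{u}{\Txm v} = h^2 \sum_{j=0}^{N} \sum_{i=1}^{N} u_{i,j}\, v_{i-1,j}.
\end{equation*}
Next we substitute $i' = i+1$ in the first double sum. The index $i'$ then runs over $1,\dots,N$, and the summand becomes $u_{i',j} v_{i'-1,j}$. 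This is exactly the second expression, which proves the identity.

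The only point needing care is the bookkeeping at the boundary. We must check that the zero convention in the definition of $\tau^\pm$ removes precisely the non-matching terms: the row $i=N$ on the left and the row $i=0$ on the right. After that, the ranges of the two sums coincide. Since everything is a finite sum, there is no further obstacle. Equivalently, in matrix form, $\Txp$ and $\Txm$ are the upper and lower shift matrices (tensorized with the identity in $j$), which are transposes of each other. The weight $h^2$ is a constant multiple of the Euclidean inner product, so transposition gives adjointness.
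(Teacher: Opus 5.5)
Your proof is correct and is essentially the paper's argument. The paper uses bilinearity to reduce to $u = \delta^{ij}$ and checks the cases $i = 0$ and $i \neq 0$ separately, which is the entry-by-entry form of your reindexing (equivalently, of your observation that the two shift matrices are transposes), and its case split does the same boundary bookkeeping as your vanishing rows $i = N$ and $i = 0$.
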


\begin{proof}
    We will prove it for~$\alpha = x$; for other directions, the proof is the same.
    Since the scalar product is bilinear, it is enough to consider the case
    when~$u$ is a basis function, 
    that is, $u = \delta^{ij}$ for some~$0 \leq i, j \leq N$.
    If~$i = 0$, then~$\Txp \delta^{ij} \equiv 0$, so~$\Prodh{\Txp \delta^{ij}}{v} = 0$.
    On the other hand, $\left(\Txm v\right)_{ij} = 0$,
    and consequently~$\Prodh{\delta^{ij}}{\Txp v} = 0$ as well.
    For~$i \neq 0$, function~$\Txp \delta^{ij}$ is nonzero only at the point
    corresponding to~$(i - 1, j)$,
    so
    \begin{equation}
        \Prodh{\Txp \delta^{ij}}{v} = h^3\, v_{i-1,j} = h^3\,\left(\Txm v\right)_{ij}
        = \Prodh{\delta^{ij}}{\Txm v}
    \end{equation}
    which completes the proof.
    \qed
\end{proof}

\begin{lemma}
\label{lem:shift-flips-gradient-polarity}
$\Tm{\alpha} \circ \Dp[\alpha] = \Dm[\alpha]$.
\end{lemma}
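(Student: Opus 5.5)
The plan is to check the identity pointwise, i.e.\ compare the values of $\Tm{\alpha}(\Dp[\alpha] u)$ and $\Dm[\alpha] u$ at every grid point $(i,j)$ for an arbitrary $u \in \Dh$. Both sides are linear in $u$, so one could also check it on the basis functions $\delta^{ij}$ as in the proof of Lemma~\ref{lem:shifts-are-adjoin}. A direct pointwise computation is simpler, however. As there, I treat only $\alpha = x$; the other directions are identical with the roles of the indices exchanged.

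The computation splits into three cases according to the value of $i$.

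\emph{Interior case, $0 < i \le N$.} By the definition of $\Txm$, $\bigl(\Txm \Dxp u\bigr)_{ij} = \bigl(\Dxp u\bigr)_{i-1,j}$. Since $i-1 < N$, the forward difference at $(i-1,j)$ is defined, and it equals $(u_{i,j} - u_{i-1,j})/h$, which is exactly $\bigl(\Dxm u\bigr)_{ij}$.

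\emph{Boundary case, $i = 0$.} The shift gives $\bigl(\Txm \Dxp u\bigr)_{0j} = 0$. On the other side, the backward difference at $i=0$ would require $u_{-1,j}$, so $\bigl(\Dxm u\bigr)_{0j}$ is zero by the convention that difference operators vanish where the right-hand side is undefined. The two sides therefore agree.

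\emph{The value $\bigl(\Dxp u\bigr)_{Nj}$.} This value is set to zero by convention, but it never enters the left-hand side at any point. The shift only reads $\bigl(\Dxp u\bigr)_{i-1,j}$ with $i-1 \le N-1$, so no further case arises.

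The only real care is the bookkeeping of the zero-padding conventions at $i=0$ and $i=N$. One must confirm that the truncations of $\Txm$ and of $\Dxm$ occur at the same index ($i=0$), and that the truncation of $\Dxp$ at $i=N$ is discarded by the shift. Once these boundary conventions are seen to align, the interior identity is immediate and the lemma follows.
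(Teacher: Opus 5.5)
Your proposal is correct and is essentially the paper's proof: a pointwise check for $\alpha = x$, where the case $i > 0$ reduces to $(\Dxp u)_{i-1,j} = (\Dxm u)_{ij}$ and at $i = 0$ both sides vanish by the zero-padding conventions. Your remark that the truncated value $(\Dxp u)_{Nj}$ is never read by the shift is a small clarification that the paper leaves implicit.
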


\begin{proof}
    Let~$u \in \Dh$, then for~$i > 0$ we have
    \begin{equation}
        \left(\Txm \Dxp u\right)_{ij} = \left(\Dxp u\right)_{i-1,j} = h^{-1}(u_{ij} - u_{i-1,j})
        = \left(\Dxm u\right)_{ij}.
    \end{equation}
    For~$i = 0$, both sides are zero.
    Proof for other directions is the same.
    \qed
\end{proof}

\begin{lemma}[Discrete product rule]
\label{lem:discrete-product-rule}
Any pair~$u,v \in \Dh$ satisfies
\begin{equation}
\begin{aligned}
  \Dp[\alpha] (u v) &= \left(\Dp[\alpha] u\right)\,v + \Tp{\alpha}u\left(\Dp[\alpha] v\right) \\
  \Dm[\alpha] (u v) &= \left(\Dm[\alpha] u\right)\,v + \Tm{\alpha}u\left(\Dm[\alpha] v\right).
\end{aligned}
\end{equation}
\end{lemma}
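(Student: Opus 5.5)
The plan is a direct pointwise verification. Both sides of each identity are elements of $\Dh$, so it suffices to compare them at every grid index $(i,j)$. As in the proofs of Lemma~\ref{lem:shifts-are-adjoin} and Lemma~\ref{lem:shift-flips-gradient-polarity}, I will treat only $\alpha = x$. The $y$ direction differs only by swapping the roles of the indices.

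For the forward identity, fix $(i,j)$ with $i < N$, so that $\Dxp$ is given by its defining formula there. Adding and subtracting $u_{i+1,j}v_{i,j}$ gives
\begin{equation}
  \Dxp(uv)_{i,j} = \frac{u_{i+1,j}v_{i+1,j} - u_{i,j}v_{i,j}}{h}
  = \frac{u_{i+1,j}-u_{i,j}}{h}\,v_{i,j} + u_{i+1,j}\,\frac{v_{i+1,j}-v_{i,j}}{h}.
\end{equation}
The first term is $(\Dxp u)_{i,j} v_{i,j}$. Since $i<N$, $(\Txp u)_{i,j} = u_{i+1,j}$, so the second term is $(\Txp u)_{i,j}(\Dxp v)_{i,j}$. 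The backward identity is handled symmetrically for $i>0$, this time adding and subtracting $u_{i-1,j}v_{i,j}$:
\begin{equation}
  \Dxm(uv)_{i,j} = \frac{u_{i,j}-u_{i-1,j}}{h}\,v_{i,j} + u_{i-1,j}\,\frac{v_{i,j}-v_{i-1,j}}{h},
\end{equation}
and here $u_{i-1,j} = (\Txm u)_{i,j}$.

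The only point needing care is the boundary indices, where the operators are set to zero by convention. At $i = N$, the left side $\Dxp(uv)_{N,j}$ is zero by definition. On the right, $(\Dxp u)_{N,j} = 0$, and both $(\Txp u)_{N,j}$ and $(\Dxp v)_{N,j}$ vanish, so the right side is zero as well. Likewise at $i=0$, for the backward identity, $(\Dxm u)_{0,j} = 0$ and $(\Txm u)_{0,j}=0$, so both sides are again zero.

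This boundary bookkeeping is the main (mild) obstacle. One has to confirm that the zero-extension conventions for the differences and the shifts are consistent with each other, which they are. Beyond that, the argument is just the classical discrete Leibniz rule obtained by adding and subtracting a mixed term.
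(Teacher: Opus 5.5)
Your proof is correct and follows the paper's own argument. For $i<N$ you add and subtract $u_{i+1,j}v_{i,j}$, identify $u_{i+1,j}$ with $(\Txp u)_{i,j}$, and observe that both sides vanish at $i=N$; you also write out the $\Dxm$ case explicitly and keep the $1/h$ factor that the paper's displayed computation drops.
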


\begin{proof}
    For~$i < N$ we have
    \begin{equation}
    \begin{aligned}
        \Dxp(u v)_{ij} &= u_{i+1,j} v_{i+1,j} - u_{ij}v_{ij} \\&=
        u_{i+1,j}v_{i+1,j} - u_{i+1,j}v_{ij} + u_{i+1,j}v_{ij} - u_{ij}v_{ij} \\&=
        u_{i+1,j}(\nabla_{x+}v)_{ij} + (\nabla_{x+}u)_{ij}v_{ij} \\&=
        \left(\Txp u\right)_{ij}(\nabla_{x+}v)_{ij} + (\nabla_{x+}u)_{ij}v_{ij}.
    \end{aligned}
    \end{equation}
    For~$i = N$, both sides are zero.
    Proofs for other directions and for~$\Dm[\alpha]$ are the same.
    \qed
\end{proof}

\begin{lemma}
\label{lem:zero-gradient-integral}
Any~$u \in \Dzh$ satisfies~$\Prodh{\Dp[\alpha] u}{1} = 0$.
\end{lemma}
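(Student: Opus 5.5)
The plan is to prove the claim for~$\alpha = x$ by a direct telescoping computation; the direction~$y$ is identical with the roles of the indices swapped. By definition,
\begin{equation}
    \Prodh{\Dxp u}{1} = h^2 \sum_{j=0}^{N} \sum_{i=0}^{N} \left(\Dxp u\right)_{ij}.
\end{equation}
Recall that~$\Dxp u$ is zero at~$i = N$, where the forward difference is undefined. So for each fixed~$j$ the inner sum only runs over~$0 \le i \le N-1$.

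For a fixed row index~$j$, I will use the definition of~$\Dxp$ to write
\begin{equation}
    \sum_{i=0}^{N-1} \frac{u_{i+1,j} - u_{ij}}{h} = \frac{u_{N,j} - u_{0,j}}{h}.
\end{equation}
The points~$(0, jh)$ and~$(Nh, jh) = (1, jh)$ lie on~$\partial\Omega$. Hence they belong to~$\partial\Omega_h$ for every~$0 \le j \le N$, including the corner rows. Since~$u \in \Dzh$, both~$u_{0,j}$ and~$u_{N,j}$ vanish, so each row sum is zero. Summing over~$j$ then gives~$\Prodh{\Dxp u}{1} = 0$.

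An alternative route uses the earlier lemmas. One could write~$\Dxp u = h^{-1}(\Txp u - u)$ up to the boundary row, and apply Lemma~\ref{lem:shifts-are-adjoin} to move the shift onto the constant function~$1$. However, $\Txm 1$ differs from~$1$ at~$i = 0$, and the zeroed row~$i = N$ needs the same kind of bookkeeping. The telescoping argument is cleaner, so I will use it.

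The only point needing care is exactly this boundary bookkeeping. I must make sure that the convention ``zero where undefined'' removes the~$i = N$ term and does not create an extra term. I must also check that both endpoints of every row, including~$j = 0$ and~$j = N$, are boundary points. Once these are confirmed, there is no real obstacle.
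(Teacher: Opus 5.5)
Your proposal is correct and uses the same row-by-row telescoping argument as the paper: each row sum collapses to $u_{Nj}-u_{0j}$, which vanishes because both endpoints lie on $\partial\Omega_h$. Your explicit handling of the $i=N$ term, where $\nabla_{x+}u$ is zero by convention, is cleaner than the paper's write-up, which carries over 3D indices and a sum running to $i=N$.
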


\begin{proof}
    \begin{equation}
    \begin{aligned}
        \Prodh{\Dxp u}{1} &= h^3 \sum_{\bm{x} \in \Omega_h} (\Dxp u)(\bm{x}) \\
        &= h^2\sum_{j,k = 0}^N \left\{ \sum_{i = 0}^N \left[u_{i+1,j,k} - u_{ijk}\right] \right\} \\
        &= h^2\sum_{j,k = 0}^N \left[ u_{Njk} - u_{0jk} \right] = 0,
    \end{aligned}
    \end{equation}
    since~$\left.u\right|_{\partial\Omega_h} = 0$.
    \qed
\end{proof}

\begin{lemma}[Discrete integration by parts]
\label{lem:discrete-by-parts}
Any pair~$u,v\in \Dh$ such that~$uv \in \Dzh$ satisfies
\begin{equation}
    \Prodh{\Dp[\alpha] u}{v} = - \Prodh{u}{\Dm[\alpha] v}
\end{equation}
\end{lemma}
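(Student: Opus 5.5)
The natural route combines the previous three lemmas: apply the discrete product rule to $uv$, integrate against $1$, and use Lemma~\ref{lem:zero-gradient-integral} to kill the left-hand side. Concretely, since $uv \in \Dzh$, Lemma~\ref{lem:zero-gradient-integral} gives $\Prodh{\Dp[\alpha](uv)}{1} = 0$. For the product rule I will use the ordering with $v$ differentiated and $u$ unshifted: swapping the roles of $u$ and $v$ in Lemma~\ref{lem:discrete-product-rule},
\begin{equation}
  \Dp[\alpha](uv) = \left(\Dp[\alpha] u\right)\,v + \left(\Dp[\alpha] v\right)\Tp{\alpha}u .
\end{equation}
Pairing with $1$ then yields
\begin{equation}
  0 = \Prodh{\Dp[\alpha] u}{v} + \Prodh{\Tp{\alpha} u}{\Dp[\alpha] v}.
\end{equation}

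Next I move the shift onto the other factor. By Lemma~\ref{lem:shifts-are-adjoin}, $\Prodh{\Tp{\alpha} u}{\Dp[\alpha] v} = \Prodh{u}{\Tm{\alpha}\Dp[\alpha] v}$. By Lemma~\ref{lem:shift-flips-gradient-polarity}, $\Tm{\alpha}\Dp[\alpha] v = \Dm[\alpha] v$. Substituting gives $\Prodh{\Dp[\alpha] u}{v} = -\Prodh{u}{\Dm[\alpha] v}$, which is the claim.

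The step I expect to need care is the consistency of the boundary conventions. The operators $\Dp[\alpha]$ and $\Tp{\alpha}$ are set to zero at $i=N$, so the product rule holds trivially there. The sum in Lemma~\ref{lem:zero-gradient-integral} telescopes to $u_N - u_0$ only if the $i=N$ term is read as zero (or as $u_{N+1}=0$). With either reading, boundary vanishing of $uv$ at $i=0$ and $i=N$ gives zero, so the lemma applies to $uv$ as stated.

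I would also check that Lemma~\ref{lem:shifts-are-adjoin} really holds with the zero-padding conventions. Both sides reduce to $h^2\sum_{i<N} u_{i+1,j} v_{i,j}$, so it does. If any convention mismatch appeared, the fallback is a direct telescoping (Abel summation) computation along each grid line, using only $u_0v_0 = u_Nv_N = 0$.
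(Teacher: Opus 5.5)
Your proof is correct and is essentially the paper's argument: apply \cref{lem:zero-gradient-integral} to $uv$, expand with the product rule to get $\Prodh{\Dp[\alpha] u}{v} + \Prodh{\Tp{\alpha}u}{\Dp[\alpha] v} = 0$, move the shift across with \cref{lem:shifts-are-adjoin}, and finish with \cref{lem:shift-flips-gradient-polarity}. One small correction: the product-rule identity you display is exactly \cref{lem:discrete-product-rule} as stated, so no swapping of $u$ and $v$ is needed.
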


\begin{proof}
    By~\cref{lem:zero-gradient-integral}, $\Prodh{\Dp[\alpha](uv)}{1} = 0$.
    Combining it with~\cref{lem:discrete-product-rule} we obtain
    \begin{equation}
        \Prodh{\Dp[\alpha] u}{v} + \Prodh{\Tp{\alpha}u}{\Dp[\alpha] v} =
        \Prodh{\Dp[\alpha](uv)}{1} = 0
    \end{equation}
    so~$\Prodh{\Dp[\alpha] u}{v} = - \Prodh{\Tp{\alpha}u}{\Dp[\alpha] v}$.
    By~\cref{lem:shifts-are-adjoin} we get
    $\Prodh{\Tp{\alpha}u}{\Dp[\alpha] v} = \Prodh{u}{\Tm{\alpha}\Dp[\alpha] v}$,
    and finally~\cref{lem:shift-flips-gradient-polarity}
    gives us~$\Prodh{u}{\Tm{\alpha}\Dp[\alpha] v} = \Prodh{u}{\Dm[\alpha] v}$,
    which proves the desired equality.
    \qed
\end{proof}

\begin{theorem}[Discrete Poincaré inequality]
\label{thm:discrete-poincare}
    There exists a constant~$K > 0$,
    independent of~$N$,
    such that
    \begin{equation}
        \Normh{\uh} \leq K \Normdh{\uh}
    \end{equation}
    for all~$\uh \in \Dzh$.
\end{theorem}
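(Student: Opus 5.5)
We prove the inequality one grid line at a time with a telescoping argument, the discrete analogue of the one-dimensional proof of the Poincaré inequality. We only use the forward difference in the $x$ direction, so we may take $K = 1$, independent of $N$.

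Fix $\uh \in \Dzh$ and a row index $j$. Since $u_{0,j} = 0$ (the point $(0, jh)$ lies on $\partial\Omega_h$), for every $0 \le i \le N$ we can telescope:
\begin{equation}
    u_{i,j} = \sum_{k=0}^{i-1} \left(u_{k+1,j} - u_{k,j}\right) = h \sum_{k=0}^{i-1} \left(\Dxp u\right)_{k,j}.
\end{equation}
The Cauchy--Schwarz inequality in $\mathbb{R}^i$ then gives
\begin{equation}
    u_{i,j}^2 \le h^2\, i \sum_{k=0}^{i-1} \left(\Dxp u\right)_{k,j}^2 \le h^2 N \sum_{k=0}^{N-1} \left(\Dxp u\right)_{k,j}^2 .
\end{equation}
The right-hand side no longer depends on $i$.

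Next we sum over $0 \le i \le N$, which contributes a factor $N+1$. Since $u_{N,j} = 0$, the term $i = N$ vanishes, so we sum only over $0 \le i \le N-1$ and get the factor $N$. This yields
\begin{equation}
    \sum_{i} u_{i,j}^2 \le h^2 N^2 \sum_{k} \left(\Dxp u\right)_{k,j}^2 = \sum_{k} \left(\Dxp u\right)_{k,j}^2 ,
\end{equation}
using $hN = 1$.

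Multiplying by $h^2$ and summing over $j$ gives
\begin{equation}
    \Normh{\uh}^2 \le \Normh{\Dxp \uh}^2 \le \Normh{\Dxp \uh}^2 + \Normh{\Dyp \uh}^2 = \Normdh{\uh}^2 .
\end{equation}
Here $\Dxp u$ is zero at $i = N$ by definition, so summing $k$ over $0 \le k \le N-1$ is the same as summing over all of $\Omega_h$. This proves the inequality with $K = 1$.

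The only delicate point is the scaling bookkeeping. The counting factor coming from the sums ($N$ from Cauchy--Schwarz, $N$ from summing over $i$) must be exactly cancelled by the powers of $h$ from the difference quotient and the weight in $\Prodh{\cdot}{\cdot}$. This cancellation is what makes $K$ independent of $N$. Even without the refinement that drops the $i = N$ term, the constant would be $(N+1)/N \le 2$, which is still uniform in $N$.
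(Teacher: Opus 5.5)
Your proof is correct, and it takes a different route from the paper's.

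The paper uses a multiplier (summation-by-parts) argument, the discrete analogue of $\int u^2 = \int \partial_x(x)\,u^2 = -2\int x\,u\,\partial_x u$. With $g_{ij} = ih$ it writes $\Normh{u}^2 = \Prodh{\Dxm g}{u^2}$ and moves the difference onto $u^2$ with the discrete integration-by-parts lemma. It then expands $\Dxp(u^2) = (u + \Txp u)\Dxp u$ with the discrete product rule and concludes by Cauchy--Schwarz, giving $K = 2$.

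You instead use the discrete fundamental theorem of calculus along each row, telescoping from $u_{0,j} = 0$, followed by Cauchy--Schwarz in $\mathbb{R}^i$. This buys several things:
\begin{itemize}
\item It is fully self-contained: none of the shift, product-rule or integration-by-parts lemmas are needed.
\item It gives the sharper constant $K = 1$. You could even get $K \le 1/\sqrt{2}$ by keeping $\sum_{i=1}^{N-1} i = N(N-1)/2$ instead of bounding $i \le N$.
\item It shows that only the condition $u_{0,j} = 0$ is essential; the condition at $i = N$ merely improves the constant.
\end{itemize}

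The paper's route, in turn, reuses the discrete calculus toolkit it has just built and mirrors the continuous proof line by line. Its Cauchy--Schwarz step on the triple product $g\,(u + \Txp u)\,\Dxp u$, however, really needs $\max|g| \le 1$ rather than $\Normh{g}$. It also uses $\Normh{1} = 1$, whereas in fact $\Normh{1} = 1 + h$. Your argument has no such weight and avoids this subtlety entirely.

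One small imprecision in your closing remark: without dropping the $i = N$ term, $(N+1)/N \le 2$ bounds $K^2$, so $K \le \sqrt{2}$. This does not affect your conclusion.
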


\begin{proof}
    Let~$g \in \Dh$ be given by~$g_{ij} = ih$.
    Then~$\left(\Dxm g\right)_{ij} = 1$ unless~$i = N$.
    Therefore, by~\cref{lem:discrete-by-parts}
    \begin{equation}
        \Normh{u}^2 = \Prodh{\Dxm g}{u^2} = \Prodh{g}{\Dxp (u^2)}.
    \end{equation}
    Using~\cref{lem:discrete-product-rule} we get
    \begin{equation}
        \Dxp u^2 = (u + \Txp u) \Dxp u.
    \end{equation}
    Since~$\Normh{\Txp u} = \Normh{u}$,
    using Cauchy-Schwarz inequality we obtain
    \begin{equation}
        \Normh{u}^2 = \Prodh{g}{(u + \Txp u) \Dxp u} \leq 2 \Normh{g} \Normh{u} \Normh{\Dxp u}
    \end{equation}
    and thus~$\Normh{u} \leq 2 \Normh{g} \Normh{\Dxp} \leq 2 \Normdh{u}$,
    since~$\Normh{g} \leq \Normh{1} = 1$.
    \qed
\end{proof}

\section{Theoretical foundations of the discrete weak formulation of the Stokes equations}
\label{sec:stokes-formulation}

We start by recalling the strong formulation of the Stokes equations:
find velocity~$\bm{u} \coloneq (u_x, u_y)$ and pressure~$p$ such that
\begin{equation}
\left\{
\begin{aligned}
    -\Delta\bm{u} + \nabla p &= \bm{f} \text{ in }\Omega\\
    \nabla\cdot\bm{u} &= 0 \text{ in }\Omega\\
    \bm{u} &= \bm{g} \text{ on } \partial\Omega
\end{aligned}
\right.
\end{equation}
By introducing stress~$\bm{\sigma}$ we can reformulate it as a first-order system:
\begin{equation}
\left\{
\begin{aligned}
    -\nabla\cdot\bm{\sigma} + \nabla p &= \bm{f} \text{ in }\Omega\\
    \nabla\cdot\bm{u} &= 0 \text{ in }\Omega\\
    \bm{\sigma} - \nabla \bm{u} &= 0 \text{ in }\Omega \\
    \bm{u} &= \bm{g} \text{ on } \partial\Omega
\end{aligned}
\right.
\end{equation}
As demonstrated in~\cite{roberts_dpg_2014},
if we define~$A \colon H_A \to \bm{L}^2(\Omega)$ by
\begin{equation}
    A(\bm{u}, \bm{\sigma}, p) \coloneq
    \left(
        -\nabla\cdot\bm{\sigma} + \nabla p,
        \nabla\cdot\bm{u},
        \bm{\sigma} - \nabla \bm{u}
    \right)
\end{equation}
where~$H_A \subset \bm{L}^2(\Omega) \times \bm{L}^2(\Omega) \times L^2(\Omega)$
is an appropriate Sobolev-type space,
the variational formulation~$\Prod{}{Au}{v} = \Prod{}{f}{v}$
is well-posed with~$L^2$ norm on the test space
and the graph norm~$\Norm{H_A}{u}^2 \coloneq \Norm{L^2}{u}^2 + \Norm{L^2}{Au}^2$
on the trial space.
Furthermore, to obtain stability in the~$L^2$ norm on the trial space,
it may be possible to use the
\emph{graph adjoint norm}~$\Norm{H_{A^*}}{v}^2 \coloneq \Norm{L^2}{v}^2 + \Norm{L^2}{A^*v}^2$
on the test space, where~$A^*$ is the formal adjoint of~$A$.
We apply these ideas to derive a stable discrete formulation
of the Stokes equations in the CRVPINN framework.

To ensure our discrete formulation is well-defined,
we must take care to properly choose the discrete space
where~$u_h = (\uh, \sh, \ph)$ lives.
Let us denote it by~$\Uh = V_h \times S_h \times P_h$
and endow it with the~$\Dh$ scalar product, that is,
for~$u_h = (\uh, \sh, \ph)$
and~$v_h = (\vh, \th, \qh)$ we let
\begin{equation}
    \Prod{\Uh}{u}{v} = \Prodh{\uh}{\vh} + \Prodh{\sh}{\th} + \Prodh{\ph}{\qh}
\end{equation}

For the velocity space~$V_h$, the natural choice is~$\Dzh^2$,
since it incorporates the Dirichlet boundary conditions.
In a few places we will use a projection operator~$\pi_0 \colon \Dh^2 \to \Dzh$
that sets the boundary values of a vector field to zero.
We can replace the Laplacian with its discrete version~$\Delta_h$.
For the first order formulation, we can write it as 
either~$\Delta_h \uh = \Dp \cdot \sh$,
with~$\sh = \Dm \uh$,
or $\Delta_h \uh = \Dm \cdot\sh$,
with~$\sh = \Dp \uh$.
This choice is arbitrary, but it will influence the choice of other
discrete derivative signs.
We will assume the choice of the first option:~$\sh = \Dm \uh$.
Similarly, the direction of the pressure gradient is arbitrary, and we will choose~$\Dp p_h$.
The momentum equation thus reads
\begin{equation}
    - \Dp \cdot \sh + \Dp \ph = \bm{f}.
\end{equation}
We will write the zero divergence condition
in a way that ensures~$\Prodh{\Dp \ph}{\uh} = - \Prodh{\ph}{\Dm \cdot \uh}$
is zero:
\begin{equation}
    \Dm \cdot \uh = 0.
\end{equation}
\begin{lemma}
\label{lem:divh-image}
    Image of the map~$\operatorname{div}_h \colon \Dzh^2 \to \Dh$
    given by~$\operatorname{div}_h \uh = \Dm \cdot \uh$
    is the space~$\Dph$ defined as
    \begin{equation}
        \Dph = \left\{
            f \in \Dh : f|_{\Gamma_p} = 0, \Prodh{f}{1} = 0
        \right\}
    \end{equation}
    where
    \begin{equation}
        \Gamma_p = \left(\{0\} \times [0, 1] \cup [0, 1] \times \{0\}\right) 
        \cap 
        \partial \Omega_h
        \cup \{(1, 1)\}.
    \end{equation}
\end{lemma}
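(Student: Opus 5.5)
The plan is to prove the two inclusions separately. The inclusion $\operatorname{div}_h(\Dzh^2) \subseteq \Dph$ will follow from direct inspection. For the reverse inclusion I will use a duality argument: since $\operatorname{div}_h(\Dzh^2)$ is a linear subspace of the finite-dimensional space $\Dph$, it is all of $\Dph$ exactly when its orthogonal complement inside $\Dph$ (with respect to $\Prodh{\cdot}{\cdot}$) is trivial.

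\emph{Inclusion.} Let $\uh=(u_x,u_y)\in\Dzh^2$ and $f=\Dxm u_x+\Dym u_y$.
\begin{itemize}
\item \emph{Left edge $i=0$.} $\Dxm u_x$ is zero by definition. Also $(\Dym u_y)_{0,j}=h^{-1}(u_{y,0,j}-u_{y,0,j-1})=0$, because both values lie on $\partial\Omega_h$.
\item \emph{Bottom edge $j=0$.} The same argument applies with the roles of $x$ and $y$ swapped.
\item \emph{Corner $(N,N)$.} Both differences involve only boundary values, so $f_{N,N}=0$.
\end{itemize}
Hence $f|_{\Gamma_p}=0$. For the mean-zero condition I will use the $\Dm$ analogue of \cref{lem:zero-gradient-integral}. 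The sum $\sum_{i=1}^N (u_{i,j}-u_{i-1,j})$ telescopes to $u_{N,j}-u_{0,j}=0$, and the term at $i=0$ vanishes by definition. This gives $\Prodh{\Dxm u_x}{1}=0$, similarly $\Prodh{\Dym u_y}{1}=0$, and therefore $\Prodh{f}{1}=0$.

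\emph{Surjectivity.} Suppose $q\in\Dph$ satisfies $\Prodh{q}{\Dm\cdot\uh}=0$ for all $\uh\in\Dzh^2$. Since $\uh$ vanishes on $\partial\Omega_h$, the product $q\,u_\alpha$ lies in $\Dzh$. \Cref{lem:discrete-by-parts} then gives
\begin{equation}
0=\Prodh{q}{\Dm\cdot\uh}=-\Prodh{\Dxp q}{u_x}-\Prodh{\Dyp q}{u_y}
\end{equation}
for all $\uh\in\Dzh^2$. Taking $u_x=\delta^{ij}$ and $u_y=\delta^{ij}$ for interior indices $1\le i,j\le N-1$, we get
\begin{equation}
q_{i+1,j}=q_{i,j}, \qquad q_{i,j+1}=q_{i,j}, \qquad 1\le i,j\le N-1.
\end{equation}

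The next step is to see which points these relations connect, and I expect this bookkeeping to be the main obstacle. The relations chain together every index in $\{1,\dots,N\}^2$ except the corner $(N,N)$, which is not reachable from any interior point by a single step. So $q\equiv c$ on $\{1,\dots,N\}^2\setminus\{(N,N)\}$. The remaining points are the left edge, the bottom edge and $(N,N)$, which together form exactly $\Gamma_p$, so $q=0$ there because $q\in\Dph$. The mean-zero condition then gives $c\,h^2\,(N^2-1)=0$, hence $c=0$ and $q=0$.

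This unreachable corner explains why $(1,1)$ appears in $\Gamma_p$. In the final write-up I will check carefully that the top and right edges (other than the corner) are indeed linked to the interior. I will also confirm the count as a sanity check: $\dim\Dph=(N+1)^2-(2N+2)-1=N^2-2$, which matches the $N^2-2$ free parameters left after the constraints.

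Since the orthogonal complement of the image in $\Dph$ is trivial, the image equals $\Dph$.
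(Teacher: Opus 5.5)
Your proof is correct and follows essentially the same route as the paper. Both arguments rest on the adjointness $\Prodh{\Dm\cdot\uh}{q} = -\Prodh{\uh}{\Dp q}$ for $\uh\in\Dzh^2$, and on the same bookkeeping in which testing with interior Kronecker deltas forces $q$ to be constant on $\Omega_h\setminus\Gamma_p$. The difference is only organizational: the paper identifies the full kernel of $\operatorname{grad}_{0,h}$ in $\Dh$ and takes its orthogonal complement, whereas you verify $\operatorname{im}\operatorname{div}_h\subseteq\Dph$ directly and then show that the annihilator of the image inside $\Dph$ is trivial, which is equivalent.
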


\begin{proof}
    Since~$\Prodh{\Dm \cdot \uh}{\ph} = -\Prodh{\uh}{\Dp \ph}$,
    operator~$\operatorname{grad}_h \colon \Dh \to \Dzh^2$
    given by~$\operatorname{grad}_{0, h} \ph = \pi_0 (\Dp \ph)$
    is the adjoint of~$\operatorname{div}_h$ (up to the sign).
    Therefore, $\operatorname{im} \operatorname{div}_h = \left(\ker \operatorname{grad}_{0,h}\right)^\perp$.
    Suppose~$\operatorname{grad}_{0,h} p = 0$.
    Let~$a = p_{11}$.
    We have~$(\Dp p)_{ij} = 0$ for all~$0 < i, j < N$,
    so~$p_{i+1,j} = p_{ij}$,
    $p_{i,j+1} = p_{ij}$.
    This allows us to conclude that~$p_{ij} = a$ for all the points
    except the corner~$p_{NN}$ and edges~$p_{0j}$, $p_{i0}$.
    In other words, $\operatorname{grad}_{0,h} p = 0$ implies~$p$ is constant
    on~$\Omega_h \setminus \Gamma_p$.
    Conversely, all such functions have zero~$\operatorname{grad}_h$.
    Therefore, the kernel of~$\operatorname{grad}_h$ consists of
    all functions that are constant on~$\Omega_h \setminus \Gamma_p$.
    Its orthogonal complement is comprised of functions that vanish on~$\Gamma_p$,
    and have zero mean over~$\Omega_h \setminus \Gamma_p$,
    which is exactly~$\Dph$.
    \qed
\end{proof}

This lemma suggest the appropriate space for the pressure is~$P_h = \Dph$.
Finally, no additional conditions are needed for~$\sh$,
so we can take~$S_h = \Dh^4$.
The full system has the form
\begin{equation}
\left\{
\begin{aligned}
    - \Dp \cdot \sh + \Dp \ph &= \bm{f} \\
    \Dm \cdot \uh &= 0 \\
    \sh - \Dm \uh &= 0 \\
\end{aligned}
\right.
\end{equation}
Let~$a \colon \Uh \times \Uh \to \mathbb{R}$ be the bilinear form given by
\begin{equation}
\begin{aligned}
    a(u_h, v_h) &= \Prodh{- \Dp \cdot \sh + \Dp \ph}{\vh} \\
    &+ \Prodh{\Dm \cdot \uh}{\qh} \\
    &+ \Prodh{\sh - \Dm \uh}{\th}
\end{aligned}
\end{equation}
This form naturally defines~$A \colon \Uh \to \Uh'$
by~$\langle A u_h, v_h\rangle = a(u_h, v_h)$.
Using formulas for discrete integration by parts,
we can compute its adjoint~$A'$:
\begin{equation}
\begin{aligned}
    \langle A'v_h, u_h\rangle &= \Prodh{\Dp \cdot \th - \Dp \qh}{\uh} \\
    &+ \Prodh{-\Dm \cdot \vh}{\ph} \\
    &+ \Prodh{\th + \Dm \vh}{\sh}
\end{aligned}
\end{equation}
Let~$\Vh = \Uh$ with a different norm: $\Norm{\Vh}{v_h}^2 = \Normh{v_h}^2 + \Norm{\Uh'}{A'v_h}^2$,
where~$\Norm{\Uh'}{A'v_h} = \sup_{u_h \in \Uh, \Normh{u_h} = 1} \left| \langle A'v_h, u_h \rangle\right|$
is the dual norm.
We will henceforth understand~$A$ and~$A'$ as
mappings~$A \colon \Uh \to \Vh'$ and~$A'\colon \Vh \to \Uh'$, respectively.

\begin{theorem}
\label{thm:stokes-well-posed}
    There exists a constant~$\gamma > 0$,
    independent of~$N$,
    such that
    \begin{equation}
        \sup_{\substack{v_h \in \Vh \\ v_h \neq 0}}
        \frac{a(u_h, v_h)}{\ \ \ \Norm{\Vh}{v_h}}
        = \gamma \Norm{\Uh}{u_h} \quad \forall u_h \in \Uh
    \end{equation}
\end{theorem}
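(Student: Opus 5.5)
The plan is to reduce the inf-sup statement to a uniform stability estimate for the adjoint operator $A'$, using the standard argument for graph-adjoint test norms. First, the upper bound is immediate: for every $v_h$,
\begin{equation}
a(u_h, v_h) = \langle A' v_h, u_h\rangle \leq \Norm{\Uh'}{A'v_h}\,\Norm{\Uh}{u_h} \leq \Norm{\Vh}{v_h}\,\Norm{\Uh}{u_h}.
\end{equation}
So the supremum is at most $\Norm{\Uh}{u_h}$, and it is a nonnegative multiple of $\Norm{\Uh}{u_h}$ with multiplier in $(0,1]$. I will read the claimed identity as the assertion that this multiplier is bounded below by some $\gamma>0$ uniformly in $u_h$ and $N$. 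The inequality ``$\geq \gamma \Norm{\Uh}{u_h}$'' is therefore what must be proved.

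\textbf{Step 1 (reduction).} Suppose we know that $A'\colon \Uh \to \Uh'$ is bijective with $\Norm{\Uh}{v_h} \leq C\,\Norm{\Uh'}{A'v_h}$, where $C$ is independent of $N$. Given $u_h$, let $v_h$ solve $\langle A'v_h, w_h\rangle = \Prod{\Uh}{u_h}{w_h}$ for all $w_h \in \Uh$; this is possible since $\Uh$ is finite dimensional and $A'$ is injective. Then $a(u_h,v_h) = \Norm{\Uh}{u_h}^2$ and $\Norm{\Uh'}{A'v_h} = \Norm{\Uh}{u_h}$. Hence
\begin{equation}
\Norm{\Vh}{v_h}^2 \leq (1 + C^2)\Norm{\Uh}{u_h}^2,
\end{equation}
which gives the bound with $\gamma = (1+C^2)^{-1/2}$.

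\textbf{Step 2 (adjoint stability).} Write $A'v_h = (r_u, r_\sigma, r_p)$ and use the Riesz representatives of each component. The third equation gives $\th = r_\sigma - \Dm \vh$. Eliminating $\th$ turns the first two equations into a discrete Stokes problem for $(\vh, \qh) \in \Dzh^2 \times \Dph$:
\begin{equation}
-\pi_0\bigl(\Dp\cdot \Dm \vh + \Dp \qh\bigr) = r_u - \pi_0(\Dp\cdot r_\sigma), \qquad \Dm\cdot \vh = -r_p .
\end{equation}
I first handle the divergence constraint. By \cref{lem:divh-image}, $r_p \in \Dph$ is in the image of $\operatorname{div}_h$, so I lift it by some $\vh^0$ with $\Normdh{\vh^0} \lesssim \Normh{r_p}$. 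I then test the momentum equation with the divergence-free remainder $\vh - \vh^0$ and use \cref{lem:discrete-by-parts}. The term $(\Dp\cdot r_\sigma, \cdot)$ becomes $-(r_\sigma, \Dm\,\cdot)$, so it is controlled by $\Normh{r_\sigma}\,\Normdh{\cdot}$, and the pressure term drops out. Together with \cref{thm:discrete-poincare} this gives $\Normdh{\vh} + \Normh{\vh} \lesssim \Normh{r_u} + \Normh{r_\sigma} + \Normh{r_p}$. The bound $\Normh{\th} \leq \Normh{r_\sigma} + \Normdh{\vh}$ then follows directly; here I also use $\Normh{\Dm \vh} = \Normdh{\vh}$ on $\Dzh$, which follows from the adjointness of the shifts (\cref{lem:shifts-are-adjoin}).

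\textbf{Step 3 (pressure; the main obstacle).} It remains to bound $\Normh{\qh}$. Once $\vh$ and $\th$ are controlled, the momentum equation gives $(\qh, \Dm\cdot \wh) = \text{(controlled terms)}\cdot \Normdh{\wh}$ for every $\wh \in \Dzh^2$. So the whole proof hinges on a discrete inf-sup (Nečas/Ladyzhenskaya) inequality
\begin{equation}
\Normh{q_h} \leq \beta^{-1} \sup_{\wh \in \Dzh^2} \frac{\Prodh{q_h}{\Dm\cdot\wh}}{\Normdh{\wh}}, \qquad q_h \in \Dph,
\end{equation}
with $\beta$ independent of $N$. The same estimate supplies the uniform lifting $\vh^0$ used in Step 2. \Cref{lem:divh-image} only gives surjectivity, that is, it rules out spurious pressure kernels beyond constants, but it does not provide a uniform constant.

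For $\beta$, I will try an explicit discrete Bogovskiĭ-type construction. Given $q_h \in \Dph$, I build $\wh$ in two parts. The first part corrects the row means of $q_h$ by a field whose $y$-component is a discrete antiderivative in $y$ of those means, made to vanish on the boundary using the zero-mean condition. The second part is the $x$-component: for each row, a discrete antiderivative in $x$ of $q_h$ minus its row mean. Because $\Dm$ is backward, these discrete antiderivatives have exactly the form $\Dm\cdot\wh = q_h$. I then need to bound $\Normdh{\wh}$ by $\Normh{q_h}$ using one-dimensional discrete Hardy or Poincaré inequalities, proved as in \cref{thm:discrete-poincare}. The delicate point is the gradient of $\wh$ in the direction transverse to the antiderivative, which is not automatically controlled by $q_h$. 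If this estimate fails, I would instead use the staggered-grid (MAC-like) structure implicit in the $\Dp/\Dm$ pairing and transfer the known uniform MAC inf-sup constant.
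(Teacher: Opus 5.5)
Your reduction is correct and follows the paper's architecture. You prove a uniform bound $\Norm{\Uh'}{A'v_h}\ge\alpha\Normh{v_h}$ (the paper's \cref{lem:adjoin-bound}) from coercivity, the discrete Poincar\'e inequality and a discrete inf-sup condition, and then transfer it to $A$. The differences are in execution:
\begin{itemize}
\item For the transfer, the paper shows that $A'$ is bounded below in $\Norm{\Vh}{\cdot}$ and then cites Theorem~1 of~\cite{demkowicz_babuska_2006}. Your explicit optimal test function $v_h=(A')^{-1}Ru_h$, with $R$ the Riesz map of $\Uh$, is self-contained and gives the same $\gamma=(1+C^2)^{-1/2}=\alpha/\sqrt{1+\alpha^2}$.
\item For the adjoint bound, the paper splits $A'v_h$ into the pieces $\bm{f}$, $\bm{h}$, $g$, bounds three auxiliary solutions and sums them; its third claim is your lifting step. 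You lift $r_p$ once and test with the divergence-free remainder, which is shorter.
\item Reading the ``$=$'' in the statement as ``$\ge$'' is exactly what the paper's proof delivers.
\end{itemize}

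The only missing ingredient is the $N$-uniform inf-sup constant. The paper does not prove it either: it is \cref{lem:babuska-aziz}, stated without proof and supported only by \cref{fig:babuska-aziz}. Your lifting $\vh^0$ is \cref{lem:div-bound}, which is derived from it. With that lemma taken as given, your proof is as complete as the paper's.

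Your sketched proof of the lemma fails as described, for two reasons.
\begin{itemize}
\item On the row $j=N$, both values in $\nabla_{x-}u_x$ are boundary values. There $\Dm\cdot\uh=-u_{y,i,N-1}/h$, so no row-wise $x$-antiderivative can produce $q_h$ on that row. Likewise $\Dm\cdot\uh=-u_{x,N-1,j}/h$ on the column $i=N$.
\item The transverse gradient is genuinely uncontrolled. Take $q_{ij}=(-1)^j\phi_i$ on rows $1\le j\le N-1$ (zero elsewhere), with $\phi$ a fixed smooth zero-mean profile vanishing at $i=0$. Your construction gives $u_x=(-1)^j\Phi_i$, so $\nabla_{y+}u_x=\mp 2\Phi_i/h$ and $\Normdh{\uh}\sim h^{-1}\Normh{q_h}$.
\end{itemize}

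The MAC fallback is the right idea, but it needs one adjustment.
\begin{itemize}
\item Identify $u_{x,ij}$ with the face between pressure nodes $(i,j)$ and $(i+1,j)$, and $u_{y,ij}$ with the face between $(i,j)$ and $(i,j+1)$. Then $\Dm\cdot$ and $\Dp$ become the MAC divergence and gradient.
\item The boundary layout is nonstandard. Top-row and right-column nodes are fed through a single face each, and $(N,N)$ through none, which is why it lies in $\Gamma_p$.
\item Set $u_{y,i,N-1}=-hq_{i,N}$ and $u_{x,N-1,j}=-hq_{N,j}$. This costs $\lesssim\Normh{q_h}$ in $\Normdh{\cdot}$. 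It moves that mass to the adjacent interior nodes and preserves the zero total.
\item What remains is a standard zero-normal-flux MAC problem on the $(N-1)\times(N-1)$ interior block. The known uniform MAC inf-sup applies there, e.g.\ via a face-average Fortin operator applied to a continuous Bogovski\u{\i} field.
\end{itemize}
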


To prove this theorem, we need some auxiliary results.

\begin{lemma}
\label{lem:adjoint-isomorphism}
    Operator~$A'$ is an isomorphism.
\end{lemma}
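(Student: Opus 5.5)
Since $\Uh$ and $\Vh$ are the same finite-dimensional space, $A'\colon \Vh \to \Uh'$ is a linear map between spaces of equal dimension. It is therefore enough to show that $A'$ is injective: if $A' v_h = 0$, then $v_h = 0$. We read $A'v_h = 0$ as $\langle A'v_h, u_h\rangle = 0$ for all $u_h = (\uh,\sh,\ph) \in \Uh$, and test separately with each component of $u_h$.

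\textbf{Testing with $\sh$.} Take $\uh = 0$, $\ph = 0$ and let $\sh$ range over all of $S_h = \Dh^4$. Since $S_h$ carries no constraints, this gives $\th = -\Dm \vh$ pointwise on $\Omega_h$.

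\textbf{Testing with $\uh$.} Take $\sh = 0$, $\ph = 0$ and let $\uh \in \Dzh^2$. This gives $\pi_0(\Dp\cdot\th - \Dp\qh) = 0$, i.e. the momentum-type equation $\Dp\cdot\th = \Dp\qh$ holds at interior points. Substituting $\th$ yields
\begin{equation}
-\Dp\cdot\Dm\vh + \Dp \qh = 0 \quad \text{at interior points.}
\end{equation}

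\textbf{Testing with $\ph$.} Let $\ph \in \Dph$. This gives that $\Dm\cdot\vh$ is orthogonal to $\Dph$.

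\textbf{Energy argument.} Now pair the interior equation with $\vh \in \Dzh^2$ and use \cref{lem:discrete-by-parts}, which applies because $\vh$ vanishes on $\partial\Omega_h$. This gives
\begin{equation}
\Normh{\Dm\vh}^2 - \Prodh{\qh}{\Dm\cdot\vh} = 0.
\end{equation}
The main obstacle is to show that the pressure term vanishes. By \cref{lem:divh-image}, $\Dm\cdot\vh$ lies in $\Dph$. Orthogonality to $\Dph$ then gives $\Dm\cdot\vh = 0$. Also $\qh \in \Dph$, although the divergence-free statement alone already kills the term. Hence $\Normh{\Dm\vh} = 0$.

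\textbf{Concluding $\vh = 0$ and $\th = 0$.} For $\vh\in\Dzh^2$, $\Dm\vh$ and $\Dp\vh$ agree up to shifts, so the discrete Poincaré inequality (\cref{thm:discrete-poincare}) gives $\vh = 0$. It follows that $\th = -\Dm\vh = 0$.

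\textbf{Concluding $\qh = 0$.} With $\vh = 0$, the interior equation reduces to $\pi_0(\Dp \qh) = 0$, i.e. $\operatorname{grad}_{0,h}\qh = 0$. By the proof of \cref{lem:divh-image}, $\qh$ is then constant on $\Omega_h\setminus\Gamma_p$. Since $\qh \in \Dph$, it vanishes on $\Gamma_p$ and has zero mean, which forces $\qh = 0$.

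\textbf{Care point: the boundary values of $\th$.} The $\uh$-test only gives information at interior nodes. The boundary values of $\th$ are instead fixed by the $\sh$-test, which holds on all of $\Omega_h$. I should check that the boundary-value rule for $\Dm$ (defined to be zero where the difference does not exist) is consistent with the integration-by-parts step. If a residual boundary term appears there, it vanishes because $\vh|_{\partial\Omega_h}=0$.

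Thus $A'$ is injective, and hence an isomorphism by the dimension count.
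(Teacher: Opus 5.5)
Your proof is correct and follows the paper's argument. Both reduce to injectivity by the dimension count, use the $\sh$-test to get $\th=-\Dm\vh$, pair the momentum equation with $\vh$ to get the energy identity, and use \cref{lem:divh-image} to handle the pressure. The differences are cosmetic: the paper simply drops $\Prodh{\qh}{\Dm\cdot\vh}$ (this is the $\ph$-test with $\ph=\qh\in P_h$), and it proves $\qh=0$ via surjectivity of $\operatorname{div}_h$, choosing $\uh$ with $\Dm\cdot\uh=\qh$, rather than via your dual fact $\ker\operatorname{grad}_{0,h}\cap\Dph=\{0\}$.
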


\begin{proof}
    Since we are dealing with finite-dimensional spaces,
    it is enough to show injectivity.
    Suppose~$A'v_h = 0$ for some~$v_h = (\vh, \th, \qh)$.
    Then~$\th = - \Dm\vh$,
    and applying~$A'v_h$ to~$(\vh, \bm{0}, 0)$ we obtain
    \begin{equation}
    \begin{aligned}
        0 &= \Prodh{-\Dp \cdot \th}{\vh} - \Prodh{\Dp \qh}{\vh} \\
          &= \Prodh{\th}{\Dm \vh} + \Prodh{q_h}{\Dm \cdot \vh} \\
          &= \Prodh{-\Dm \vh}{\Dm \vh} + 0 \\
          &= -\Normdh{\vh}^2
    \end{aligned}
    \end{equation}
    so~$\vh = 0$ and~$\th = -\Dm \vh = 0$.
    Finally, by~\cref{lem:divh-image}, there exists~$\uh \in \Dzh^2$
    such that~$\Dm \cdot \uh = \qh$,
    so~$\Normh{\qh}^2 = \Prodh{\qh}{\qh} = \Prodh{\qh}{\Dm \cdot \uh} = -\Prodh{\Dp \qh}{\uh}$.
    But since~$\th = 0$, we have~$-\Prodh{\Dp \qh}{\uh} = \Prodh{-\Dp \cdot \th -\Dp \qh}{\uh} = 0$
    and so~$\qh = 0$.
    \qed
\end{proof}

\begin{lemma}[Discrete Babuška-Aziz inequality]
\label{lem:babuska-aziz}
There exists a constant~$\beta > 0$,
independent of~$N$,
such that
\begin{equation}
    \sup_{\substack{\uh \in V_h \\ \uh \neq 0}}
    \frac{\Prodh{\Dm \cdot \uh}{\ph}}{\Normdh{\uh}}
    \geq
    \beta\Normh{\ph}
\end{equation}
for every~$\ph \in P_h$.
\end{lemma}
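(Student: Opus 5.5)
The plan is to construct, for each $\ph \in P_h = \Dph$, an explicit velocity field $\uh \in \Dzh^2$ such that
\[
\Dm \cdot \uh = \ph \quad\text{and}\quad \Normdh{\uh} \leq C \Normh{\ph},
\]
with $C$ independent of $N$. This is the discrete analogue of the Bogovski\u{\i} right inverse of the divergence. Given such a field,
\[
\frac{\Prodh{\Dm\cdot\uh}{\ph}}{\Normdh{\uh}} = \frac{\Normh{\ph}^2}{\Normdh{\uh}} \geq C^{-1}\Normh{\ph},
\]
so the lemma holds with $\beta = 1/C$. By \cref{lem:divh-image} we already know that some preimage exists, since $\ph$ lies in the image of $\operatorname{div}_h$. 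The only issue is a uniform bound on a suitably chosen one, and that bound is the main obstacle.

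To obtain the bound, I would construct the preimage by a dimension-splitting, Poincaré-type procedure on the tensor grid rather than by abstract arguments.

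\textbf{Step 1 (split off row means).} Write $\ph = r + s$, where $r_{ij} = \bar p_j$ is a function of $j$ only (up to the $\Gamma_p$ corrections) and $s$ has zero mean along each row $j$. The factor $\bar p_j$ is the row average of $\ph$ over the interior indices $i$.

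\textbf{Step 2 (handle the oscillating part).} Solve $\Dxm u_x = s$ rowwise by discrete summation, $u_{x,ij} = h\sum_{k\le i} s_{kj}$. The zero row mean makes $u_x$ vanish at both $i=0$ and $i=N$.

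\textbf{Step 3 (handle the mean part).} Handle $r$ by $u_y = \phi(x)\,w(y)$. Here $w$ is the discrete antiderivative of $\bar p$ in $j$, which vanishes at both ends because $\Prodh{\ph}{1}=0$. The factor $\phi$ is a fixed smooth bump with $\int\phi = 1$, sampled on the grid. This creates an error term $(\phi - 1)\bar p_j$ with zero row mean, which is fed back into Step 2.

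\textbf{Step 4 (estimate).} Each summation operator is bounded in $\Normh{\cdot}$ by Cauchy--Schwarz, with constant independent of $h$.

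The hard part is the gradient bound. The construction above gives $u_x$ with $\Dxm u_x = s$ controlled, but $\Dym u_x$ involves $y$-differences of row sums of $\ph$. These are not controlled by $\Normh{\ph}$ alone, because $\ph$ is merely $L^2$. This is exactly why the continuous Bogovski\u{\i} operator needs a genuinely two-dimensional construction.

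The remedy I would try first is duality with the discrete Poincaré inequality. Solve the discrete Poisson problem $-\Delta_h \bm{z} = \Dp \ph$ (suitably projected by $\pi_0$) in $\Dzh^2$, and set $\uh = -\bm{z}$. Then
\[
\Normdh{\uh} \leq \sup_{\vh}\frac{\Prodh{\ph}{\Dm\cdot\vh}}{\Normdh{\vh}} \leq \sqrt2\,\Normh{\ph},
\]
using discrete integration by parts (\cref{lem:discrete-by-parts}). The remaining task is to show $\Prodh{\Dm\cdot\uh}{\ph} \geq c\Normh{\ph}^2$. This reduces to a discrete Nečas inequality $\Normh{\ph} \lesssim \Norm{H^{-1}_h}{\Dp\ph}$. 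I would prove it on the uniform grid by discrete Fourier (sine/cosine) expansion, since $\Delta_h$ diagonalizes there with eigenvalues comparable to $|\xi|^2$ uniformly in $h$. The checkerboard-like modes excluded by the $\Gamma_p$ constraints in $\Dph$ (the kernel identified in \cref{lem:divh-image}) need separate care, and verifying that no spurious near-kernel mode degrades $\beta$ as $N\to\infty$ is the step I expect to be most delicate.
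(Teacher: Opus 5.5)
The paper states this lemma without proof and only reports the numerically computed constant against mesh size (Figure~\ref{fig:babuska-aziz}), so there is no argument to compare with. Your proposal does not supply one either.

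\textbf{The gap.} You abandon Steps 1--4 yourself, correctly. In addition, the top row $j=N$ cannot use horizontal fluxes at all, since $\uh\in\Dzh^2$ forces $u_{x,iN}=0$. The ``remedy'' is circular. Your $\uh=-\bm z$ is exactly the $\Proddh{\cdot}{\cdot}$-Riesz representative of $\vh\mapsto\Prodh{\ph}{\Dm\cdot\vh}$, so identically
\[
\Prodh{\Dm\cdot\uh}{\ph}=\Normdh{\uh}^2=\Bigl(\sup_{\vh\neq 0}\frac{\Prodh{\ph}{\Dm\cdot\vh}}{\Normdh{\vh}}\Bigr)^2 .
\]
Your ``remaining task'' $\Prodh{\Dm\cdot\uh}{\ph}\ge c\Normh{\ph}^2$ is therefore the lemma itself, squared. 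The discrete Ne\v{c}as inequality $\Normh{\ph}\lesssim\Norm{H^{-1}_h}{\Dp\ph}$ is also just the lemma, by definition of the dual norm and discrete integration by parts. The $\sqrt2$ bound you prove is only continuity of $\operatorname{div}_h$, the easy direction.

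The Fourier plan would not close this. The Dirichlet $\Delta_h$ is diagonal in discrete sines. But $\pi_0\Dxp$ of a cell-centred mode $\cos(k\pi(i-\tfrac12)h)\cos(l\pi(j-\tfrac12)h)$ is a sine in $x$ times a cosine in $y$, which couples to infinitely many $y$-sine modes. So $\operatorname{div}_h(-\Delta_h)^{-1}\operatorname{grad}_{0,h}$ has no symbol from which $\beta$ can be read off. This is the same reason separation of variables does not give the continuous inf-sup constant of the square. Nor are there checkerboard modes: by \cref{lem:divh-image} the kernel of $\operatorname{grad}_{0,h}$ consists only of constants on $\Omega_h\setminus\Gamma_p$. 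The whole difficulty is uniformity in $N$ near the no-slip walls.

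\textbf{A route that works.} You need a genuinely two-dimensional input, most naturally the continuous Ne\v{c}as/Bogovski\u{\i} inequality transported to the grid.
\begin{enumerate}
\item Read $p_{ij}$ ($1\le i,j\le N$) as a constant on the cell $K_{ij}=[(i-1)h,ih]\times[(j-1)h,jh]$. Read $u_{x,ij}$ and $u_{y,ij}$ as mean normal velocities on its right and top faces. Then $h^2(\Dm\cdot\uh)_{ij}$ is the net outflow of $K_{ij}$, so the scheme is a MAC-type staggered discretisation.
\item Take $\bm u\in H^1_0$ with $\operatorname{div}\bm u=\ph$ and $|\bm u|_{H^1}\le C_B\Normh{\ph}$. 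Its face-average interpolant satisfies $\Dm\cdot\uh=\ph$ exactly, by the divergence theorem on each cell. A trace/Poincar\'e scaling argument on cells and two-cell patches gives $\Normdh{\uh}\lesssim|\bm u|_{H^1}$.
\item The one grid-specific obstacle is that $\Dzh^2$ forces $u_{x,iN}=0$ and $u_{y,Nj}=0$, which the interpolant violates. Handle it first with the only available fluxes, $u_{y,i,N-1}=-h\,p_{iN}$ and $u_{x,N-1,j}=-h\,p_{Nj}$. These cost $O(\Normh{\ph})$ in $\Normdh{\cdot}$.
\item Add $p_{iN}$ to $p_{i,N-1}$ and $p_{Nj}$ to $p_{N-1,j}$; the modified pressure still has zero mean. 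Apply steps 1--2 on the square $[0,(N-1)h]^2$. Its Bogovski\u{\i} constant equals that of the unit square by scaling, which gives $\beta$ independent of $N$.
\end{enumerate}
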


\begin{lemma}
\label{lem:div-bound}
Let~$\qh \in P_h$.
There exists~$\vh \in V_h$
such that
\begin{equation}
    \Dm \cdot \vh = \qh
\end{equation}
and~$\Normdh{\vh} \leq \beta^{-1}\Normh{\qh}$
\end{lemma}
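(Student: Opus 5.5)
The plan is to take $\vh$ as the minimum-norm preimage of $\qh$ under $\operatorname{div}_h$, and to obtain the bound from \cref{lem:babuska-aziz} by a duality (closed range) argument. In this finite-dimensional setting that amounts to a constrained minimization.

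First, existence. By \cref{lem:divh-image}, $\operatorname{div}_h \colon \Dzh^2 \to \Dh$ has image exactly $\Dph = P_h$. Hence for $\qh \in P_h$ the affine set $\{\vh \in V_h : \Dm\cdot\vh = \qh\}$ is nonempty. $V_h = \Dzh^2$ is a Hilbert space with the scalar product $\Proddh{\cdot}{\cdot}$, because by \cref{thm:discrete-poincare} this is a genuine inner product on $\Dzh$. So we can choose $\vh$ to be the element of that set with minimal $\Normdh{\cdot}$, i.e.\ the $\Proddh{\cdot}{\cdot}$-orthogonal projection of $0$ onto the affine set. Equivalently, $\vh$ is $\Proddh{\cdot}{\cdot}$-orthogonal to $\ker\operatorname{div}_h$.

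Second, the characterization of $\vh$. Since $\vh \perp \ker \operatorname{div}_h$ in the $\nabla,h$ inner product, finite-dimensional linear algebra says $\vh$ lies in the range of the $\nabla,h$-adjoint of $\operatorname{div}_h$. Concretely, there is $\ph \in P_h$ with
\begin{equation}
\Proddh{\vh}{\wh} = \Prodh{\Dm\cdot\wh}{\ph} \quad \forall \wh \in V_h .
\end{equation}
We may take $\ph \in P_h$ because the range of $\operatorname{div}_h$ is $P_h$: components of $\ph$ orthogonal to $P_h$ pair to zero with $\Dm\cdot\wh$, so they can be discarded. This is the Riesz representative of the functional $\wh \mapsto \Prodh{\Dm\cdot\wh}{\ph}$.

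Third, the estimate. Testing with $\wh = \vh$ gives $\Normdh{\vh}^2 = \Prodh{\qh}{\ph} \le \Normh{\qh}\Normh{\ph}$. By \cref{lem:babuska-aziz} and the representation above,
\begin{equation}
\beta\Normh{\ph} \le \sup_{\wh\neq 0} \frac{\Prodh{\Dm\cdot\wh}{\ph}}{\Normdh{\wh}} = \sup_{\wh\neq 0}\frac{\Proddh{\vh}{\wh}}{\Normdh{\wh}} = \Normdh{\vh}.
\end{equation}
Combining the two gives $\Normdh{\vh}^2 \le \beta^{-1}\Normh{\qh}\Normdh{\vh}$, hence $\Normdh{\vh}\le\beta^{-1}\Normh{\qh}$. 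If $\vh = 0$ then $\qh = 0$ and the bound is trivial.

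The main obstacle is the second step: justifying that the minimal-norm preimage is represented by some $\ph$ lying in $P_h$, rather than in all of $\Dh$. I would handle it by writing $\ker(\operatorname{div}_h)^{\perp_\nabla} = \operatorname{im}(\operatorname{div}_h^{*})$, with the adjoint taken between $(V_h,\Proddh{\cdot}{\cdot})$ and $(\Dh,\Prodh{\cdot}{\cdot})$. I would then note that $\operatorname{div}_h^{*}$ annihilates $P_h^\perp = (\operatorname{im}\operatorname{div}_h)^\perp$, so its image equals $\operatorname{div}_h^{*}(P_h)$.
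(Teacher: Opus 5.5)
Your proof is correct, and it picks out the same $\vh$ as the paper: the preimage lying in the $\Proddh{\cdot}{\cdot}$-orthogonal complement $\widehat{V_h}^\perp$ of the divergence-free fields. Where you differ is in how you obtain the bound, which is more elementary.

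The paper first argues that the Babu\v{s}ka--Aziz supremum is unchanged when restricted to $\widehat{V_h}^\perp$. It then treats $B\colon \widehat{V_h}^\perp \to P_h'$ as an injective operator whose adjoint is bounded below by $\beta$. Finally it invokes Theorem~1 of Demkowicz's note together with the Babu\v{s}ka theorem, which gives existence and the estimate in one stroke.

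You instead take existence directly from \cref{lem:divh-image}. You then replace the abstract transfer-of-inf-sup theorem by the explicit Riesz representation $\Proddh{\vh}{\wh} = \Prodh{\Dm\cdot\wh}{\ph}$ with $\ph \in P_h$. After that the estimate $\Normdh{\vh}^2 = \Prodh{\qh}{\ph} \le \beta^{-1}\Normh{\qh}\Normdh{\vh}$ is immediate. In effect you prove the cited theorem in this concrete setting.

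The paper's version is shorter, given the reference. Yours has several advantages:
\begin{itemize}
\item It is self-contained.
\item It needs no restricted inf-sup step, since you take the supremum over all of $V_h$.
\item It makes explicit that the multiplier must lie in $P_h$, where \cref{lem:babuska-aziz} is available.
\item It makes explicit that the resulting $\vh$ satisfies $\Dm\cdot\vh=\qh$ pointwise, not merely when tested against $P_h$.
\item It sidesteps the slips in the paper's restricted inequality: the squared norm in the denominator and the missing factor $\beta$.
\end{itemize}
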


\begin{proof}
    Let~$\widehat{V_h}$ denote the subspace of these~$\uh \in V_h$
    that satisfy~$\Dm \cdot \uh = 0$.
    We can decompose~$V_h$ as~$V_h = \widehat{V_h} \oplus \widehat{V_h}^\perp$,
    where~$^\perp$ denotes orthogonal complement with respect to~$\Proddh{\cdot}{\cdot}$.
    Given~$\uh = \widehat{\uh} + \bm{w}_h$, $\widehat{\uh} \in \widehat{V_h}$,
    $\bm{w}_h \in \widehat{V_h}^\perp$ we have for every~$\ph \in P_h$
    \begin{equation}
        \frac{\left|\Prodh{\Dm \cdot \uh}{\ph}\right|}{\Normdh{\uh}}
        =
        \frac{\left|\Prodh{\Dm \cdot \bm{w}_h}{\ph}\right|}{\sqrt{\Normdh{\widehat{\uh}}^2 + \Normdh{\bm{w}_h}^2}}
        \leq
        \frac{\left|\Prodh{\Dm \cdot \bm{w}_h}{\ph}\right|}{\Normdh{\bm{w}_h}^2}
    \end{equation}
    so the supremum of the above expression over all~$\bm{w}_h \in \widehat{V_h}^\perp$
    is the same as over all~$\vh \in V_h$.
    Therefore, by the Babuška-Aziz inequality (\cref{lem:babuska-aziz}),
    \begin{equation}
    \label{eq:subspace-babuska-azis}
        \sup_{\substack{\uh \in \widehat{V_h}^\perp \\ \uh \neq 0}}
        \frac{\Prodh{\Dm \cdot \uh}{\ph}}{\Normdh{\uh}}
        \geq
        \Normh{\ph}
    \end{equation}
    
    Let~$b \colon \widehat{V_h}^\perp \times P_h \to \mathbb{R}$ be a bilinear form
    given by~$b(\uh, \ph) = \Prodh{\Dm \cdot \uh}{\ph}$.
    We have a pair of adjoint operators:
    \begin{equation}
    \begin{aligned}
        B &\colon \widehat{V_h}^\perp \to P_h', 
        & \langle B\vh, \ph \rangle &= \Prodh{\Dm \cdot \uh}{\ph} \\
        B' &\colon P_h \to (\widehat{V_h}^\perp)', 
        & \langle B'\ph, \vh \rangle &= - \Prodh{\uh}{\Dp \ph}
    \end{aligned}
    \end{equation}
    \Cref{eq:subspace-babuska-azis} is equivalent to the statement
    that~$B'$ satisfies the inf-sup condition
    with constant~$\beta$,
    that is~$\Norm{V_h'}{B'\ph} \geq \beta \Normh{\ph}$.
    Since~$B$ is injective and~$B'$ is bounded below,
    by Theorem 1 of~\cite{demkowicz_babuska_2006},
    $B$ is bounded below with the same constant,
    and by the Babuška Theorem,
    problem~$B\vh =\qh$ has a unique solution satisfying~$\Normdh{\vh} \leq \beta^{-1} \Normh{\qh}$.
    \qed
\end{proof}

\begin{lemma}
\label{lem:adjoin-bound}
    There exists~$\alpha > 0$,
    independent of~$N$,
    such that~$\Norm{\Uh'}{A'v_h} \geq \alpha \Normh{v_h}$
    for all~$v_h \in \Vh$.
\end{lemma}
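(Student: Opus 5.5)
The plan is to bound each of the three components of $v_h = (\vh, \th, \qh)$ by $M \coloneq \Norm{\Uh'}{A'v_h}$. We test $A'v_h$ against well-chosen elements of $\Uh$, using $|\langle A'v_h, u_h\rangle| \leq M \Normh{u_h}$. The tools are the discrete integration by parts (\cref{lem:discrete-by-parts}), the Poincaré inequality (\cref{thm:discrete-poincare}) and \cref{lem:div-bound}. Every constant will depend only on $K$ and $\beta$, so it will be independent of $N$.

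\emph{Step 1 (basic tests).} First, test with $(\bm{0}, \th + \Dm\vh, 0)$. This is admissible because $S_h = \Dh^4$ is unconstrained, and it gives $\Normh{\th + \Dm \vh} \leq M$. Hence
\begin{equation}
\Normh{\th} \leq M + \Normh{\Dm \vh}.
\end{equation}
For $\vh \in \Dzh^2$ we have $\Normh{\Dm\vh} = \Normdh{\vh}$: by \cref{lem:shift-flips-gradient-polarity}, $\Dm \vh$ is a shift of $\Dp \vh$, and $\Dp\vh$ vanishes in the index $i=N$ where the shift loses information. Next, we test with $(\vh, \Dm\vh, 0)$ and $(\bm{0},\bm{0},\qh)$; the latter is admissible since $\qh \in P_h$. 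Integrating by parts, $\langle A'v_h,(\vh,\bm{0},0)\rangle = -\Prodh{\th}{\Dm\vh} + \Prodh{\qh}{\Dm\cdot\vh}$. Adding $\Prodh{\th+\Dm\vh}{\Dm\vh}$ and the pressure test $-\Prodh{\Dm\cdot\vh}{\qh}$ cancels both $\th$ and $\qh$. What remains is
\begin{equation}
\Normdh{\vh}^2 \leq M\left(\Normh{\vh} + 2\Normdh{\vh} + \Normh{\qh}\right) \leq M\left((K+2)\Normdh{\vh} + \Normh{\qh}\right),
\end{equation}
where the last step uses \cref{thm:discrete-poincare}.

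\emph{Step 2 (pressure).} This is the step I expect to be the main obstacle, since $\qh$ is only weakly controlled. By \cref{lem:div-bound}, we pick $\wh \in V_h$ with $\Dm\cdot\wh = \qh$ and $\Normdh{\wh} \leq \beta^{-1}\Normh{\qh}$. Testing with $(\wh,\bm{0},0)$ and integrating by parts gives
\begin{equation}
\langle A'v_h, (\wh,\bm{0},0)\rangle = -\Prodh{\th}{\Dm\wh} + \Normh{\qh}^2 .
\end{equation}
Using $\Normh{\wh} \leq K\Normdh{\wh}$, this yields
\begin{equation}
\Normh{\qh}^2 \leq (MK + \Normh{\th})\,\beta^{-1}\Normh{\qh},
\end{equation}
and with Step 1,
\begin{equation}
\Normh{\qh} \leq \beta^{-1}\left((K+1)M + \Normdh{\vh}\right).
\end{equation}
The care needed here is to check that every test function actually lies in $\Uh$. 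In particular, $\wh$ and $\vh$ must vanish on the boundary, which holds because both lie in $\Dzh^2$. The boundary terms in \cref{lem:discrete-by-parts} also have to vanish, which requires the relevant product to lie in $\Dzh$; this again holds because the velocity factor is in $\Dzh^2$.

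\emph{Step 3 (closing).} Substituting the pressure bound into the Step 1 estimate gives
\begin{equation}
\Normdh{\vh}^2 \leq C_1 M \Normdh{\vh} + C_2 M^2,
\end{equation}
with $C_1, C_2$ depending only on $K$ and $\beta$. Solving this quadratic inequality gives $\Normdh{\vh} \leq C M$. Then $\Normh{\vh} \leq K C M$, $\Normh{\th} \leq (1+C)M$, and $\Normh{\qh} \leq C' M$ follow from the earlier bounds. Summing the squares gives $\Normh{v_h} \leq \alpha^{-1} M$ with $\alpha$ independent of $N$, which proves the lemma.
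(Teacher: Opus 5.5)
Your proof is correct, but it takes a genuinely different route from the paper's.

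\textbf{The paper's route.} The paper first splits $A'v_h$ into its Riesz components $(\bm{f},\bm{h},g) \in V_h\times S_h\times P_h$. It then invokes \cref{lem:adjoint-isomorphism} to write $v_h = v_h^{\bm f} + v_h^{\bm h} + v_h^g$, where each summand solves an adjoint problem with a single nonzero data component. It bounds the three pieces separately:
\begin{itemize}
\item momentum data, via Poincar\'e and Babu\v{s}ka--Aziz;
\item stress data, in the same way;
\item divergence data, by lifting $g$ with \cref{lem:div-bound} and reducing to the stress case.
\end{itemize}
It then recombines them using the orthogonality of the three data components.

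\textbf{Your route.} You derive an a priori estimate directly for the given $v_h$. The three test functions $(\bm{0}, \th+\Dm\vh, 0)$, $(\vh, \Dm\vh, \qh)$ and $(\wh,\bm{0},0)$, with $\wh$ the divergence lift of $\qh$, yield three coupled inequalities, which you close with a quadratic inequality. The factor $2$ in front of $\Normdh{\vh}$ in Step 1 is only slack: testing once with $(\vh,\Dm\vh,\qh)$ gives exactly $\Normdh{\vh}^2 \leq \Norm{\Uh'}{A'v_h}\,\Normh{(\vh,\Dm\vh,\qh)}$.

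\textbf{What each buys.} Your argument is shorter and does not need prior solvability of the adjoint problem. It never uses \cref{lem:adjoint-isomorphism}; in fact it implies injectivity of $A'$, and hence that lemma, as a corollary. It also invokes the discrete Babu\v{s}ka--Aziz inequality only once, through the lift in \cref{lem:div-bound}. The paper's superposition argument gives separate, interpretable constants $C_1, C_2, C_3$ for each type of data, mirroring the classical analysis of the first-order Stokes system. The price is first establishing that $A'$ is an isomorphism, plus the bookkeeping of the decomposition.

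Both arguments rest on the same $N$-independent inputs: the Poincar\'e constant $K$ and the Babu\v{s}ka--Aziz constant $\beta$. The latter is stated but not proved in the paper. So your $\alpha$ is $N$-independent under exactly the same hypotheses as the paper's.
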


\begin{proof}
    Let~$\Vh \ni v_h = (\vh, \th, \qh)$
    and~$A'v_h = f$.
    We can decompose~$f \in \Uh'$
    as~$\langle f, u_h\rangle = 
    \Prodh{\bm{f}}{\uh} + 
    \Prodh{\bm{h}}{\sh} +
    \Prodh{g}{\ph}$
    for~$\bm{f} \in V_h$, $\bm{h} \in S_h$ and~$g \in P_h$.
    Since~$A'$ is an isomorphism by~\cref{lem:adjoint-isomorphism},
    there exist~$v_h^{\bm{f}}, v_h^{\bm{h}}, v_h^g \in \Vh$
    such that~$\langle Av_h^{\bm{f}}, u_h\rangle = \Prodh{\bm{f}}{\uh}$,
    $\langle Av_h^{\bm{h}}, u_h\rangle = \Prodh{\bm{h}}{\sh}$,
    and~$\langle Av_h^g, u_h\rangle = \Prodh{g}{\ph}$.

    \begin{claim}
        $\Normh{v_h^{\bm{f}}} \leq C_1\Normh{\bm{f}}$
        for some constant~$C_1 > 0$,
        independent of~$N$.
    \end{claim}
    
    \noindent
    Let~$v_h^{\bm{f}} = \left(\vhf, \thf, \qhf\right)$.
    We have~$\thf + \Dm \cdot \vhf = 0$,
    so~$\Dp \cdot \thf = -\Delta_h \vhf$
    and thus
    \begin{equation}
    \label{eq:vhf-momentum}
        \Prodh{-\Delta_h \vhf - \Dp \qhf}{\uh} = \Prodh{\bm{f}}{\uh}
    \end{equation}
    for all~$\uh \in V_h$.
    Putting~$\uh = \vhf$ in the above equation we get
    \begin{equation}
        \Prodh{-\Delta_h \vhf}{\vhf} - \Prodh{\Dp \qhf}{\vhf} = \Prodh{\bm{f}}{\vhf}.
    \end{equation}
    Integrating the first term by parts
    and noting that
    \begin{equation}
    - \Prodh{\Dp \qhf}{\vhf} = \Prodh{\qhf}{\Dm \cdot \vhf} = 0    
    \end{equation}
    we get
    \begin{equation}
        \Normdh{\vhf}^2 = \Normh{\Dp\vhf}^2 = \Prodh{\bm{f}}{\vhf}
        \leq
        \Normh{\bm{f}} \Normh{\vhf}.
    \end{equation}
    Let~$K$ be the discrete Poincaré constant from Lemma \ref{thm:discrete-poincare}.
    Then
    \begin{equation}
        \Normdh{\vhf}^2
        \leq
        \Normh{\bm{f}} \Normh{\vhf}
        \leq
        K \Normh{\bm{f}} \Normdh{\vhf}
    \end{equation}
    so~$\Normdh{\vhf} \leq K \Norm{V_h'}{\bm{f}}$.
    Since~$\thf = - \Dm \cdot \vhf$,
    we have~$\Normh{\thf} = \Normdh{\vhf} \leq K \Normh{\bm{f}}$.
    Furthermore, $\Normh{\vhf} \leq K \Normdh{\vhf} \leq K^2\Normh{\bm{f}}$.
    As for~$\qhf$, from~\cref{eq:vhf-momentum} for every~$\uh \in V_h$ we have
    \begin{equation}
        -\Prodh{\Dp \qhf}{\uh} = \Prodh{\Delta_h\vhf}{\uh} + \Prodh{\bm{f}}{\uh}.
    \end{equation}
    Integrating by parts we get
    \begin{equation}
    \begin{aligned}
        \Prodh{\qhf}{\Dm \cdot \uh} &= - \Prodh{\Dp\vhf}{\Dp\uh} + \Prodh{\bm{f}}{\uh} \\
        &\leq \Normdh{\vhf} \Normdh{\uh} + \Normh{\bm{f}} \Normh{\uh} \\
        &\leq 2K \Normh{\bm{f}} \Normdh{\uh}.
    \end{aligned}
    \end{equation}
    Using the discrete Babuška-Aziz inequality (\cref{lem:babuska-aziz})
    we get
    \begin{equation}
        \Normh{\qhf} \leq \beta^{-1}\sup_{\substack{\uh \in V_h \\ \uh \neq 0}}
        \frac{\Prodh{\qhf}{\Dm \cdot \uh}}{\Normdh{\uh}}
        \leq
        2K\beta^{-1} \Normh{\bm{f}}.
    \end{equation}
    Putting together upper bounds of~$\vhf$, $\thf$ and~$\qhf$
    provides a bound of~$v_h^f$ and proves the claim.
    \qed

    \begin{claim}
        $\Normh{v_h^{\bm{h}}} \leq C_2\Norm{s_h'}{\bm{h}}$
        for some constant~$C_2 > 0$,
        independent of~$N$.
    \end{claim}

    \noindent
    We have~$\thh = -\Dm\vhh + \bm{h}$, so
    \begin{equation}
        \Prodh{-\Delta_h \vhh}{\uh} - \Prodh{\Dp\qhh}{\uh} = \Prodh{\Dp \cdot \bm{h}}{\uh}.
    \end{equation}
    Letting~$\uh = \vhh$ and integrating by parts, we obtain
    \begin{equation}
        \Prodh{\Dp\vhh}{\Dp\vhh} + \Prodh{\qhh}{\Dm \cdot \vhh} = -\Prodh{\bm{h}}{\Dm \vhh}.
    \end{equation}
    Since~$\Prodh{\qhh}{\Dm \cdot \vhh} = 0$, we are left with
    \begin{equation}
        \Normdh{\vhh}^2 = -\Prodh{\bm{h}}{\Dm \vhh} \leq \Normh{\bm{h}} \Normdh{\vhh},
    \end{equation}
    so~$\Normdh{\vhh} \leq \Normh{\bm{h}}$.
    This implies~$\Normh{\vhh} \leq K\Normh{\bm{h}}$ and
    \begin{equation}
        \Normh{\thh} = \Normh{\Dm\vhh + \bm{h}} \leq \Normh{\Dm\vhh} + \Normh{\bm{h}}
        \leq (1 + K)\Normh{\bm{h}}.
    \end{equation}
    Bound for~$\qhh$ can be establish in the same way as the one for~$\qhf$.
    For every~$\uh \in V_h$ we have
    \begin{equation}
        -\Prodh{\Dp \qhh}{\uh} = \Prodh{\Delta_h\vhh}{\uh} + \Prodh{\Dp \cdot \bm{h}}{\uh}.
    \end{equation}
    so
    \begin{equation}
    \begin{aligned}
        \Prodh{\qhh}{\Dm \cdot \uh} &= - \Prodh{\Dp\vhh}{\Dp\uh} - \Prodh{\bm{h}}{\Dm\uh} \\
        &\leq \Normdh{\vhh} \Normdh{\uh} + \Normh{\bm{h}} \Normdh{\uh} \\
        &\leq 2 \Normh{\bm{h}} \Normdh{\uh}.
    \end{aligned}
    \end{equation}
    and thus by~\cref{lem:babuska-aziz}
    \begin{equation}
        \Normh{\qhh} \leq \beta^{-1}\sup_{\substack{\uh \in V_h \\ \uh \neq 0}}
        \frac{\Prodh{\qhh}{\Dm \cdot \uh}}{\Normdh{\uh}}
        \leq
        2\beta^{-1} \Normh{\bm{h}},
    \end{equation}
    which is the desired bound.
    \qed

    \begin{claim}
        $\Normh{v_h^g} \leq C_3\Normh{g}$
        for some constant~$C_3 > 0$,
        independent of~$N$.
    \end{claim}

    \noindent
    By~\cref{lem:div-bound}, there exists~$\vh^0 \in V_h$
    such that
    \begin{equation}
        -\Dm \cdot \vh^0 = g, \quad
        \Normdh{\vh^0} \leq \beta^{-1}\Normh{g}.
    \end{equation}
    Let~$\wh = \vhg - \vh^0$,
    then~$\Dm \cdot \wh = 0$ and~$\thg + \Dm \wh = -\Dm \vh^0$,
    so~$\widetilde{w}_h = (\wh, \thg, \qhg)$
    satisfies
    \begin{equation}
    \langle A\widetilde{w}_h, u_h\rangle = - \Prodh{\Dm\vh^0}{\sh}
    \end{equation}
    for all~$u_h \in \Uh$.
    The previous claim implies that~$\Normh{\widetilde{w}_h} \leq C\Normh{\Dm\vh^0}$.
    Given that~$\Normh{\Dm\vh^0} \leq \beta^{-1}\Normh{g}$,
    we have
    \begin{equation}
    \begin{aligned}
        \Normh{v_h^g} &= \Normh{\widetilde{w}_h + (\vh^0, \bm{0}, 0)} \\
        &\leq \Normh{\widetilde{w}_h} + \Normh{\vh^0} \\
        &\leq C\beta^{-1}\Normh{g} + K\Normdh{\vh^0} \\
        &\leq \beta^{-1}(C + K)\Normh{g},
    \end{aligned}
    \end{equation}
    which bounds~$\Normh{v_h^g}$ by~$\Normh{g}$ as required.
    \qed

    \medskip
    Now we are ready to prove the main statement of the lemma.
    By linearity, $v_h = v_h^{\bm{f}} + v_h^{\bm{h}} + v_h^g$,
    so
    \begin{equation}
    \begin{aligned}
        \Normh{v_h}^2
        &= \Normh{v_h^{\bm{f}} + v_h^{\bm{h}} + v_h^g}^2 \\
        &= 3\left(
            \Normh{v_h^{\bm{f}}}^2 +
            \Normh{v_h^{\bm{h}}}^2 +
            \Normh{v_h^g}^2
        \right) \\
        &\leq 3\left(
            C_1^2\Normh{\bm{f}}^2 +
            C_2^2\Normh{\bm{h}}^2 +
            C_3^2\Normh{g}^2
        \right) \\
        &\leq \underbrace{3 \max\{C_1,C_2,C_3\}^2}_{\alpha^{-2}}\left(
            \Normh{\bm{f}}^2 +
            \Normh{\bm{h}}^2 +
            \Normh{g}^2
        \right) \\
        &= \alpha^{-2} \Norm{\Uh'}{f}^2 \\
        &= \alpha^{-2} \Norm{\Uh}{A'v_h}^2
    \end{aligned}
    \end{equation}
    since~$\bm{f}$, $\bm{g}$, $h$ extended to~$\Uh'$ are orthogonal,
    and thus~$\Norm{\Uh}{A'v_h} \geq \alpha \Normh{v_h}$.
\end{proof}

\begin{proof}[of~\cref{thm:stokes-well-posed}]
By~\cref{lem:adjoin-bound} we have
\begin{equation}
    \Norm{\Vh}{v_h}^2 = \Normh{v_h}^2 + \Norm{\Uh'}{A'v_h}^2
    \leq
    \left(\alpha^{-2} + 1\right)\Norm{\Uh'}{A'v_h}^2,
\end{equation}
so
\begin{equation}
    \Norm{\Uh'}{A'v_h} \geq \frac{\alpha}{\sqrt{1 + \alpha^2}}\Norm{\Vh}{v_h}.
\end{equation}
This means~$A'$ is bounded below by~$\gamma = \alpha / \sqrt{1 + \alpha^2}$.
Since~$A'$ is an isomorphism (\cref{lem:adjoint-isomorphism}),
the same is true about~$A$.
In particular, $A$ is injective.
By Theorem~1 of~\cite{demkowicz_babuska_2006},
$A$ is bounded below by~$\gamma$.
\qed
\end{proof}

\begin{figure}
    \centering
    \includegraphics[width=0.7\linewidth]{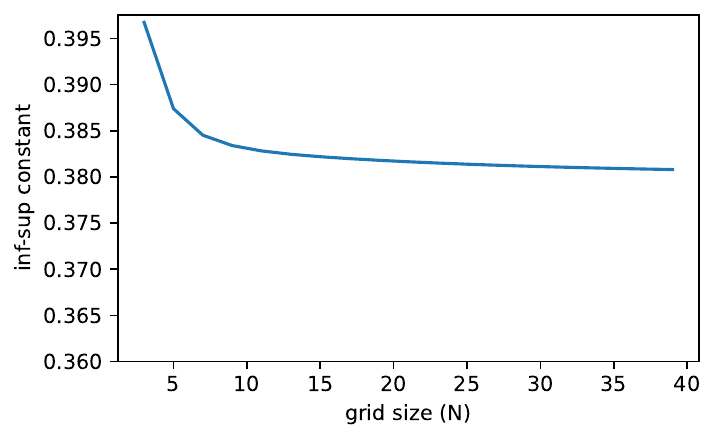}
    \caption{%
    Dependence of the constant from Lemma \ref{lem:babuska-aziz} on the mesh size.}
    \label{fig:babuska-aziz}
\end{figure}

\section{Programming environment}
\label{sec:programming}

We provide a Python environment for defining discrete weak formulations.
Our Python library 
\textit{DVF-CRVPINN}\footnote{\url{https://github.com/marcinlos/dvf}}, provides tools for prototyping Discrete Variational Formulations.
It employs standard scientific computing libraries, such as \textit{NumPy} and \textit{SciPy}.  \textit{matplotlib}, 
as well as \textit{PyTorch} for neural network training.
This environment can be run locally on a laptop or workstation equipped with Google Colab\footnote{\url{https://colab.research.google.com/drive/1DWFaix6nFeod6DExAQ8Ka8bEdQdFDKP2}}.
The goal of this library is to allow the building of complex formulations easily. 

We will explain now how to use this library on the simple example of Laplace problem.
We start from defining grid of $31\times 31$ collocation points,  representing the discrete domain~$\Omega_h$.

\begin{lstlisting}
grid = Grid(30, 30)
assert grid.shape == (31, 31)
assert grid.size = 31 ** 2
X, Y = grid.points  # array of shape (2, 31, 31)
assert grid.cell_volume == 1 / 30**2
\end{lstlisting}

The \lstinline{Grid} class allows simple iteration over points, indices, boundary points, etc. It also allows for the management of boundary normal vectors and the computation of cell/face volumes.

The next step is to define a functionality of a \lstinline{GridFunction} on a \lstinline{Grid} of points. 
It supports scalar, vector, and tensor values of discrete functions. Our library supports arithmetic operations of discrete functions
    \lstinline{f + g}, \lstinline{f * g}, \lstinline{f / g}, etc.;
 it allows concatenating these discrete functions with other Python functions such as     \lstinline{sin(f)}, \lstinline{exp(f)}, \lstinline{sqrt(f)}, etc.
It also supports  differential operators including gradient
    \lstinline{grad(f, "+")}, divergence \lstinline{div(F, "-")}, etc.
It is also possible to convert the discrete function to or from  \lstinline{numpy} arrays.

\begin{lstlisting}
f = GridFunction.from_function(lambda x, y: x + y, grid)
assert f(i, j) == grid.points[0, i, j] + grid.points[1, i, j]
vals = f.tabulate()     # vals.shape == (N, N)
g = GridFunction.from_array(vals, grid)
assert f == g
df = grad(f, "+")
dvals = df.tabulate()   # dvals.shape == (2, N, N)
\end{lstlisting}

Using the \textit{SymPy} library enables the creation of manufactured solution problems. We illustrate this using the example of $u_\text{exact}=\sin(\pi x) \sin(3\pi y)$ defined over the discrete domain $\Omega_h =\{ (i / 30,j / 30): 0\leq i,j \leq 30\} \subset \Omega=[0,1]^2$.

\begin{lstlisting}
from sympy import diff, sin, lambdify, pi

def laplacian(expr, *vars):
    return sum(diff(expr, v, v) for v in vars)

x, y = sympy.symbols("x y")
u_exact = sin(pi * x) * sympy.sin(3 * pi * y)
rhs = -laplacian(u_exact, x, y)

u_grid = GridFunction.from_function(lambdify([x, y], u_exact), grid)
rhs_grid = GridFunction.from_function(lambdify([x, y], rhs), grid)
\end{lstlisting}

Discrete weak formulations requires tools for computing discrete integrals, defined as appropriate sums over the values defined at the grid points, or the sums of the discrete finite-difference derivatives. We provide tools to define discrete  $\Normh{\cdot}$, $\Normdh{\cdot}$ norms, tools for defining discrete 
$\Prodh{\cdot}{\cdot}$, $\Proddh{\cdot}{\cdot}$ scalar products, as well as tools to compute discrete  integrals of arbitrary \lstinline{GridFunctions} over \lstinline{Grid} points, and discrete boundary  integrals computed over \lstinline{Grid} points and boundary points,
\begin{lstlisting}
fg = integrate(f * g)
grad_prod = product(grad(f, "+"), grad(g, "+"), "h")
grad_prod2 = product(f, g, "grad_h")
bd_term = integrate_bd(f)

n_x = GridFunction(lambda i, j: grid.facet_normal(i, j)[0], grid)
a = integrate(Dx(f, "+") * g)
b = -integrate(f * Dx(g, "-")) + integrate_bd(f * g * n_x)
assert a == b
\end{lstlisting}
%
The library supports scalar, vector, and arbitrary tensor function spaces,
\begin{lstlisting}
P = FunctionSpace(grid)                 # scalars
U = VectorFunctionSpace(grid, 2)        # 2 element vectors
S = TensorFunctionSpace(grid, (2, 2))   # 2 x 2 tensors
\end{lstlisting}
It also supports products of spaces,
\begin{lstlisting}
W = CompositeFunctionSpace(S, U, P) 
assert U.dim == 2 * P.dim
assert S.shape == (2, 2)
assert U.zero == np.array([0, 0])
# indices of basis functions on the x = 0 boundary
select_dofs(P, lambda i, j: i == 0)  
\end{lstlisting}

The next step is to use our library to construct discrete weak formulations.
We provide an easy and intuitive interface for the creation of bilinear and linear forms
inspired by FEniCS and the Unified Form Language (UFL),
\begin{lstlisting}
u = U.trial_function()
v = U.test_function()

def B_form(u, v):
    return dot(grad(u, "+"), grad(v, "+"))
\end{lstlisting}
We also directly support composite spaces
\begin{lstlisting}
uu = W.trial_function()
vv = W.test_function()

sigma, u, p = uu.components
tau, v, q = vv.components
\end{lstlisting}

\subsection{Formulation of the discrete weak Laplace problem}

For the Laplace problem:
\begin{equation}
\left\{
\begin{aligned}
    -\Delta u &= f \\
    \left.u\right|_{\partial\Omega} &= 0
\end{aligned}
\right.
\end{equation}
we will use the bilinear form~$b(u, v) = \Prodh{\Dxp u}{\Dxp v}$ and the~$\Normdh{\cdot}$ test space norm
(see~\cite{los_collocation-based_2025} for derivation and proofs of relevant bounds).

We start by defining the grid and scalar function space over it.
\begin{lstlisting}
grid = Grid(30, 30)
U = FunctionSpace(grid)
\end{lstlisting}
To apply the Dirichlet boundary conditions on the entire boundary,
we first define a function that selects all the boundary points.
\begin{lstlisting}
def U_mask_fun(i, j):
    edges_x = (0, grid.n[0])
    edges_y = (0, grid.n[1])
    m = 0 if i in edges_x or j in edges_y else 1
    return m
\end{lstlisting}
We can use it to select the corresponding degrees of freedom from the~\lstinline{U} space.
\begin{lstlisting}
u_mask = GridFunction(U_mask_fun, grid)
U_bc = select_dofs(U, u_mask, invert=True)
\end{lstlisting}
The definition of the bilinear form and assembling its matrix is now simple.
\begin{lstlisting}
def B_form(u, v):
    return dot(grad(u, "+"), grad(v, "+"))

u = U.trial_function()
v = U.test_function()

B = np.zeros((U.dim, U.dim))
assemble(B_form(u, v), B, u, v)
B_ = remove_dofs(B, U_bc)
\end{lstlisting}
We can define the right-hand side vector using the \lstinline{rhs} function
defined in the \emph{SymPy} example from section~\cref{sec:programming}.
\begin{lstlisting}
def vector_of_values(*funs):
    return np.concat([np.ravel(f.tabulate()) for f in funs])

rhs_f = GridFunction.from_function(sympy.lambdify([x, y], rhs), grid)
rhs_vec = remove_dofs(vector_of_values(rhs_f), U_bc)
\end{lstlisting}

The robust discrete variational physics-informed neural network will be trained based on the residual loss multiplied by the inverse of the Gram matrix computed in the proper discrete product,
namely
\begin{equation}
\begin{aligned}
    \mathcal{L}(\theta) &= \Norm{V'}{Au_\theta - f}^2  &=R_\theta^T G R_\theta 
    &=\text{RES}\,(\theta)^T G^{-1}\, \text{RES}(\theta)
\end{aligned}
\end{equation}
where
$R_\theta = G^{-1}\,\text{RES}(\theta)$ is the vector representing $v \mapsto \Prodh{Au - f}{v}$,
$\text{RES}(\theta)_{i} = \Prodh{Au - f}{v_i}$ and $v_i$ is the~$i$-th basis function of~$V$.
We compute the LU factorization of the Gram matrix of our~$\Proddh{\cdot}{\cdot}$ scalar product,
which coincides with the matrix of our bilinear form.
\begin{lstlisting}
G_ = B_
G_LU = scipy.linalg.lu_factor(G_)
\end{lstlisting}
Since the loss function is evaluated many times, each loss computation reuses the factorization
\begin{lstlisting}
residuum_vec = B_ @ vals_vec - M_ @ rhs_vec
residuum_rep = scipy.linalg.lu_solve(G_LU, residuum_vec)
loss = np.dot(residuum_vec, residuum_rep)
\end{lstlisting}
For efficient training of the neural network using the robust loss, to enable backpropagation when using 
\lstinline{scipy.linalg.lu_solve},
we can wrap the loss computation in a~\lstinline{torch.autograd.Function}:
\begin{lstlisting}
class ResiduumNormSq(torch.autograd.Function):
    @staticmethod
    def forward(vals):
        vals_vec = remove_dofs(vals.cpu().numpy(), W_bc)
        residuum_vec = B_ @ vals_vec - M_ @ rhs_vec
        residuum_rep = scipy.linalg.lu_solve(G_LU, residuum_vec)
        return torch.tensor(np.dot(residuum_vec, residuum_rep))
    ...
\end{lstlisting}
We also need to compute the derivative with respect to the input vector
\begin{equation}
    \mathcal{L}(\bm{v}) = (A\bm{v} - \bm{f})^TG^{-1}(A\bm{v} - \bm{f})
    \implies
    \frac{\partial \mathcal{L}}{\partial \bm{v}} = 2 A^TG^{-1}(A\bm{v} - \bm{f})
\end{equation}
This functionality is implemented as follows :
\begin{lstlisting}
    @staticmethod
    def setup_context(ctx, args, output):
        (vals,) = args
        ctx.save_for_backward(vals)

    @staticmethod
    def backward(ctx, grad_output):
        (vals,) = ctx.saved_tensors
        vals_vec = remove_dofs(vals.cpu().numpy(), W_bc)
        residuum_vec = B_ @ vals_vec - M_ @ rhs_vec
        residuum_rep = scipy.linalg.lu_solve(G2_LU, residuum_vec)
        full_residuum_rep = reinsert_dofs(B_.T @ residuum_rep, W_bc)
        grad_vals = grad_output * 2 * torch.tensor(full_residuum_rep)
        return grad_vals.to(device)
\end{lstlisting}
With these tools in hand, we are ready to define the loss function used for training:
\begin{lstlisting}
def loss_function(pinn):
    points = np.concat(grid.points).reshape(2, -1)
    args = torch.from_numpy(points.T.astype(np.float32)).to(device)
    values = torch.ravel(pinn(args).t())
    return ResiduumNormSq.apply(values)
\end{lstlisting}
which can then be used with standard~\emph{PyTorch} optimizers. 

\subsection{Formulation of the discrete weak Stokes problem}

In this section, we will show, step by step, how to implement the Stokes problem
formulation from section~\cref{sec:stokes-formulation}.
The \lstinline{grid} object remains the same as in the Laplace problem,
but the spaces are more complex:
\begin{lstlisting}
S = TensorFunctionSpace(grid, (2, 2))
U = VectorFunctionSpace(grid, 2)
P = FunctionSpace(grid)
W = CompositeFunctionSpace(S, U, P)
\end{lstlisting}
The bilinear form~$A$:
\begin{equation}
\begin{aligned}
    A(u, v) &= 
    \Prodh{-\Dp\cdot\bm{\sigma} + \Dp p}{\bm{v}} \\
    & + \Prodh{\Dm \cdot \bm{u}}{q} \\
    &+ \Prodh{\bm{\sigma} - \Dm \bm{u}}{\bm{\tau}}
\end{aligned}
\end{equation}
becomes
\begin{lstlisting}
def A_form(sigma, u, p, tau, v, q):
    return (
        dot(-div(sigma, "+") + grad(p, "+"), v)
        + dot(div(u, "-"), q)
        + dot(sigma - grad(u, "-"), tau)
    )
\end{lstlisting}
The $\Prodh{\cdot}{\cdot}$ scalar product:
\begin{equation}
    \Prodh{u}{v} = \Prodh{\bm{\sigma}}{\bm{\tau}}
    + \Prodh{\bm{u}}{\bm{v}}
    + \Prodh{p}{q}
\end{equation}
is implemented as
\begin{lstlisting}
def L2_product(sigma, u, p, tau, v, q):
    return dot(sigma, tau) + dot(u, v) + p * q
\end{lstlisting}
Finally, the~$\Prod{A^*}{\cdot}{\cdot}$ test space scalar product given by
\begin{equation}
\begin{aligned}
    \Prod{A^*}{u}{v} &= 
    \Prodh{\pi_0(\Dp \cdot \bm{\sigma} - \Dp p)}
          {\pi_0(\Dp \cdot \bm{\tau} - \Dp q)} \\
    &+ \Prodh{\Dm \cdot \bm{u}}{\Dm \cdot \bm{v}} \\
    &+ \Prodh{\bm{\sigma} + \Dm \bm{u}}{\bm{\tau} + \Dm \bm{v}}
\end{aligned}
\end{equation}
can be expressed as
\begin{lstlisting}
def AT_product(sigma, u, p, tau, v, q):
    return (
        dot(pi0(div(sigma, "+") - grad(p, "+")), 
            pi0(div(tau, "+") - grad(q, "+")))
        + dot(div(u, "-"), div(v, "-"))
        + dot(sigma + grad(u, "-"), tau + grad(v, "-"))
    )
\end{lstlisting}
The discrete weak formulation is then assembled into matrices
\begin{lstlisting}
assemble(A_form(sigma, u, p, tau, v, q), A, uu, vv)
assemble(L2_product(sigma, u, p, tau, v, q), M, uu, vv)
\end{lstlisting}
and the right-hand side vector is
\begin{lstlisting}
def vector_of_values(*funs):
    return np.concat([np.ravel(f.tabulate()) for f in funs])

rhs_vec = vector_of_values(S.zero_fun, rhs_f, P.zero_fun)
\end{lstlisting}
We also provide functionality to select boundary points for Dirichlet conditions
\begin{lstlisting}
def U_mask_fun(i, j):
    edges_x = (0, grid.n[0])
    edges_y = (0, grid.n[1])
    return 0 if i in edges_x or j in edges_y else 1

u_mask = GridFunction(U_mask_fun, grid)
U_bc = select_dofs(U, u_mask, invert=True)
\end{lstlisting}
For composite spaces, we can combine subspaces of degrees of freedom sets
\begin{lstlisting}
S_bc = select_dofs(S, s_mask, invert=True)
U_bc = select_dofs(U, u_mask, invert=True)
P_bc = select_dofs(P, p_mask, invert=True)

W_bc = W.combine_dofs(S_bc, U_bc, P_bc)
\end{lstlisting}
Degrees of freedom corresponding to Dirichlet boundary conditions are
physically removed from the arrays
\begin{lstlisting}
A_ = remove_dofs(A, W_bc)
M_ = remove_dofs(M, W_bc)
rhs_vec = remove_dofs(rhs_vec, W_bc)
\end{lstlisting}
Once the matrix and the right hand side are assembled,
the training process is the same as for the Laplace problem example.

\section{Numerical results}

In this section, we present numerical experiments illustrating the accuracy
and convergence history of the training for two versions of the Stokes problem: 
one with a manufactured solution and the cavity flow problem.
In both experiments, we use a neural network with two hidden layers, $512$ neurons each,
and train them with the Adamax optimizer (\lstinline{torch.optim.Adamax}) with default settings.
The results are obtained using PyTorch 2.9.1.

\subsection{Stokes problem with manufactured solution}

In the first numerical test, we use the manufactured solution problem,
where the exact solution is given by
\begin{equation}
\begin{aligned}
    \bm{u}(x, y) =&
    \begin{bmatrix}
     2e^x(x - 1)^2 x^2 y(y - 1)(2y - 1) \\
     -e^x (x - 1)x(x^2 + 3x - 2)(y - 1)^2 y^2
    \end{bmatrix} \\
    p(x, y) =& -424 + 156e
            + y(y - 1)(-456 + e^x
                (
                    456
                    + x^2 (228 - 5y^2 + 5y)\\&
                    + 2x (y^2 - y -228)
                    + 2x^3 (y^2 - y - 36)
                    + x^4 (y^2 - y + 12)
                )
            )
\end{aligned}
\end{equation}
We train the PINN for 5000 epochs using a $20 \times 20$ and $40\times 40$ grid sizes.
The final results are presented in~\cref{fig:pinn20x20} and~\cref{fig:pinn40x40},
respectively.
While the $20 \times 20$ solution exhibits some distortions and asymmetry,
the results obtained on the $40 \times 40$ grid are close to the exact analytical solution
given above.
The loss and true error throughout the training process are presented in~\cref{fig:training-manufactured}.
It clearly shows that the error does stay within the bounds predicted by the theoretical analysis
carried out in~\cref{sec:stokes-formulation}.
In fact, the error is rather close to the lower bound.
This can be explained by the fact that the upper bound is controlled
by the smallest generalized eigenvalues of the~$\bm{A}^T\bm{G}^{-1}\bm{A}\bm{x} = \lambda \bm{G}\bm{x}$
system, while most of these eigenvalues are significantly larger
(see~\cite{los_collocation-based_2025} for more details).

\begin{figure}
\begin{center}
    \includegraphics[width=\textwidth]{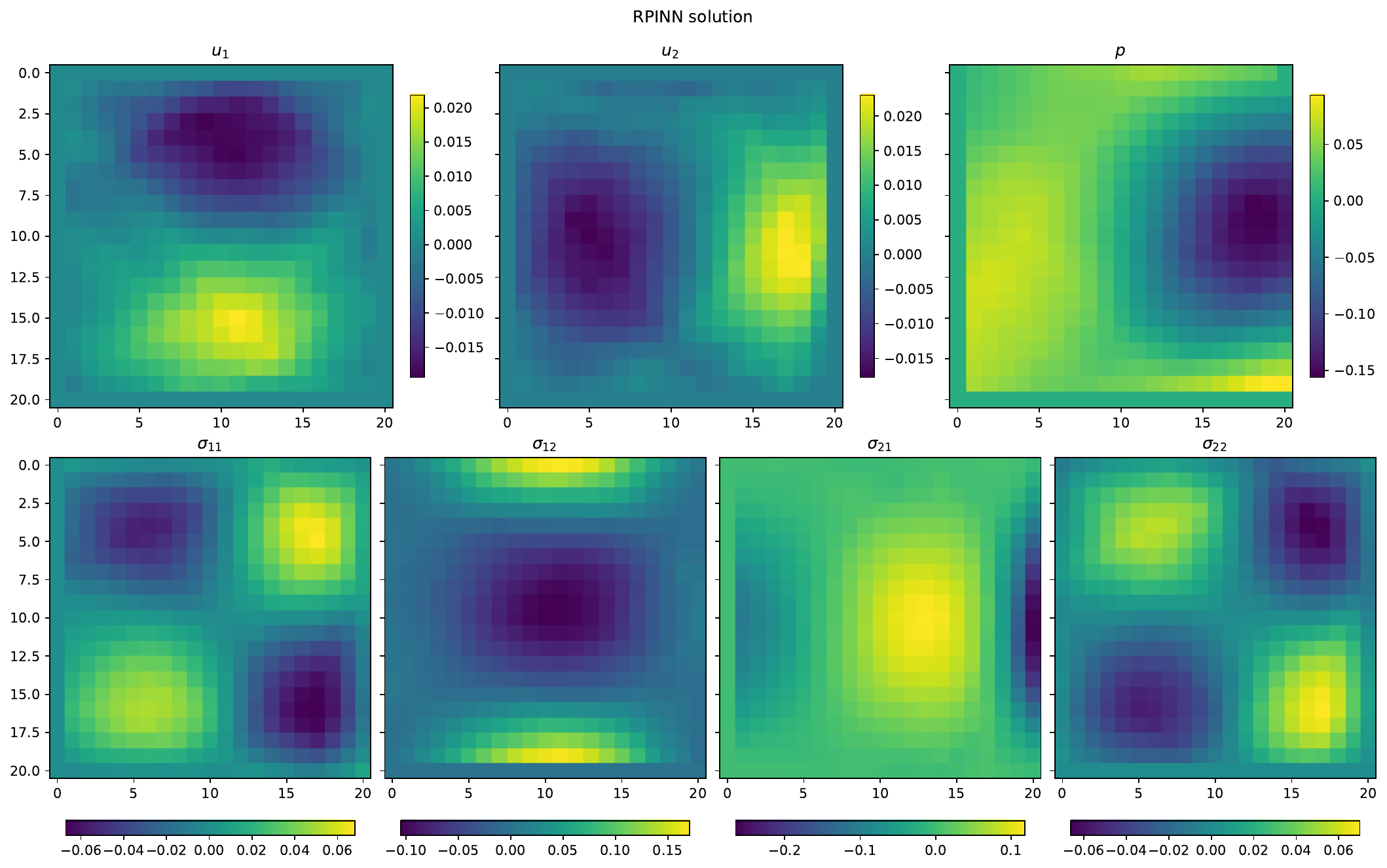}
\end{center}
    \caption{Manufactured solution Stokes problem, $20 \times 20$ grid.}
    \label{fig:pinn20x20}
\end{figure}

\begin{figure}
\begin{center}
    \includegraphics[width=\textwidth]{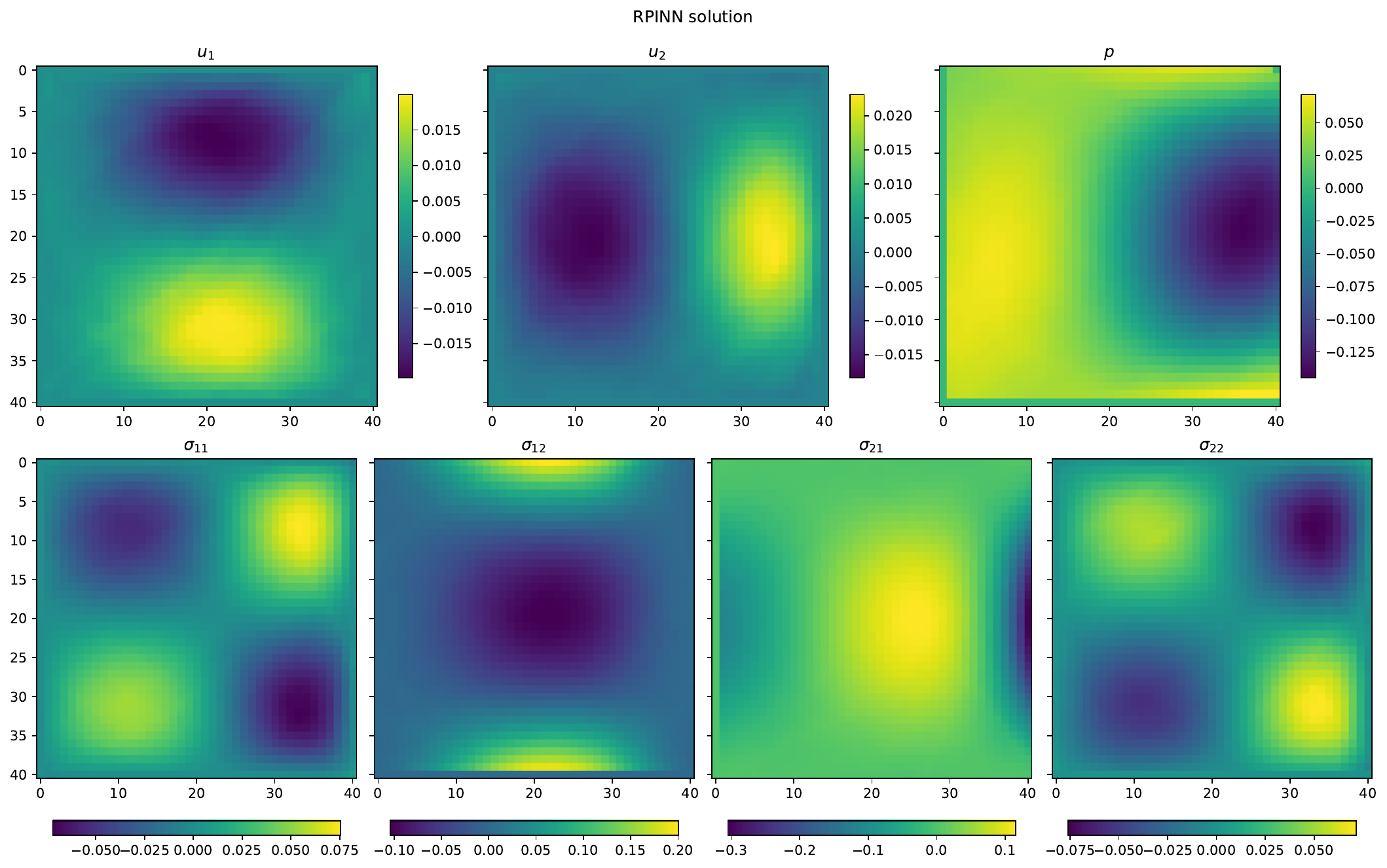}
\end{center}
    \caption{Manufactured solution Stokes problem, $40 \times 40$ grid.}
    \label{fig:pinn40x40}
\end{figure}

\begin{figure}
    \centering
    \includegraphics[width=0.48\linewidth]{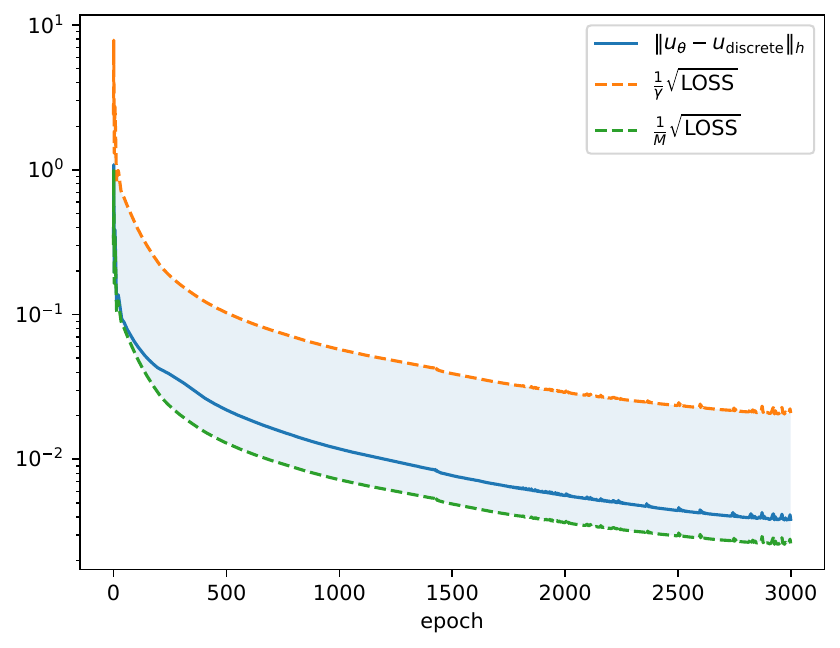}
    ~
    \includegraphics[width=0.48\linewidth]{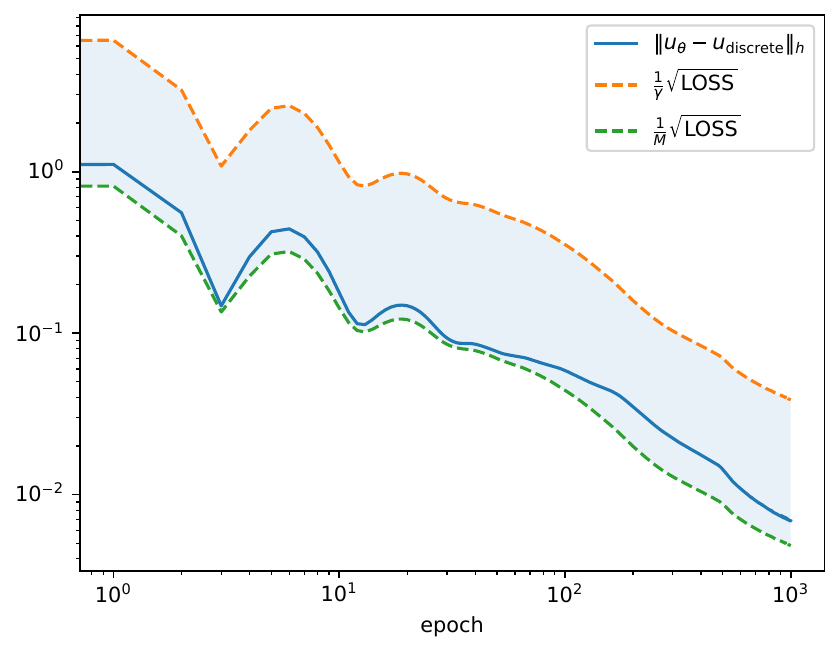}
    \caption{%
    Loss function value and true error throughout the training process
    for the manufactured solution Stokes problem.
    The right figure presents the same data as the left one in logarithmic scale,
    to better show that the lower and upper bounds hold in the initial phase
    of the training process.}
    \label{fig:training-manufactured}
\end{figure}

\subsection{Cavity flow problem}

The second test involves the two-dimensional lid cavity flow problem.
Here the right-hand side~$\bm{f}$ is zero,
and we use non-homogeneous Dirichlet boundary condition on velocity:
$\left.\bm{u}\right|_{\partial\Omega} = \bm{g}$,
where
\begin{equation}
    \bm{g}_1(x, y) =
    \begin{cases}
    1 & \text{if }y = 1\\
    0 & \text{otherwise}
    \end{cases},
    \quad
    \bm{g}_2(x, y) = 0
\end{equation}
that is, horizontal velocity is~$1$ on the upper boundary.
We can define the boundary condition values as follows.
\begin{lstlisting}
def u1_bc(x, y):
    return np.isclose(y, 1)

def u2_bc(x, y):
    return np.zeros_like(x)
\end{lstlisting}
The use of non-homogeneous Dirichlet boundary conditions requires a slight change
in the right-hand side assembly to account for the shift function:
\begin{lstlisting}
X, Y = grid.points
u_bc_data = np.array([u1_bc(X, Y), u2_bc(X, Y)])
u_bc_fun = GridFunction.from_array(u_bc_data, grid)

rhs_forcing = M @ vector_of_values(S.zero_fun, rhs_f, P.zero_fun)
rhs_bc = A @ vector_of_values(S.zero_fun, u_bc_fun, P.zero_fun)
rhs_vec = remove_dofs(rhs_forcing - rhs_bc, W_bc)
\end{lstlisting}

The PINN solution obtained after 25,000 epochs of training is presented in~\cref{fig:trained-network}.
It is fairly close to the exact solution and does not exhibit visible irregularities.
The loss and true error throughout the training process are presented in~\cref{fig:training-min}.
Since we have no analytic formulas for the exact solution of the cavity flow problem,
the error is measured with respect to a reference solution.
Like in the manufactured solution problem, the error obeys the theoretical lower and upper bounds.

\begin{figure}
    \centering
    \includegraphics[width=\linewidth]{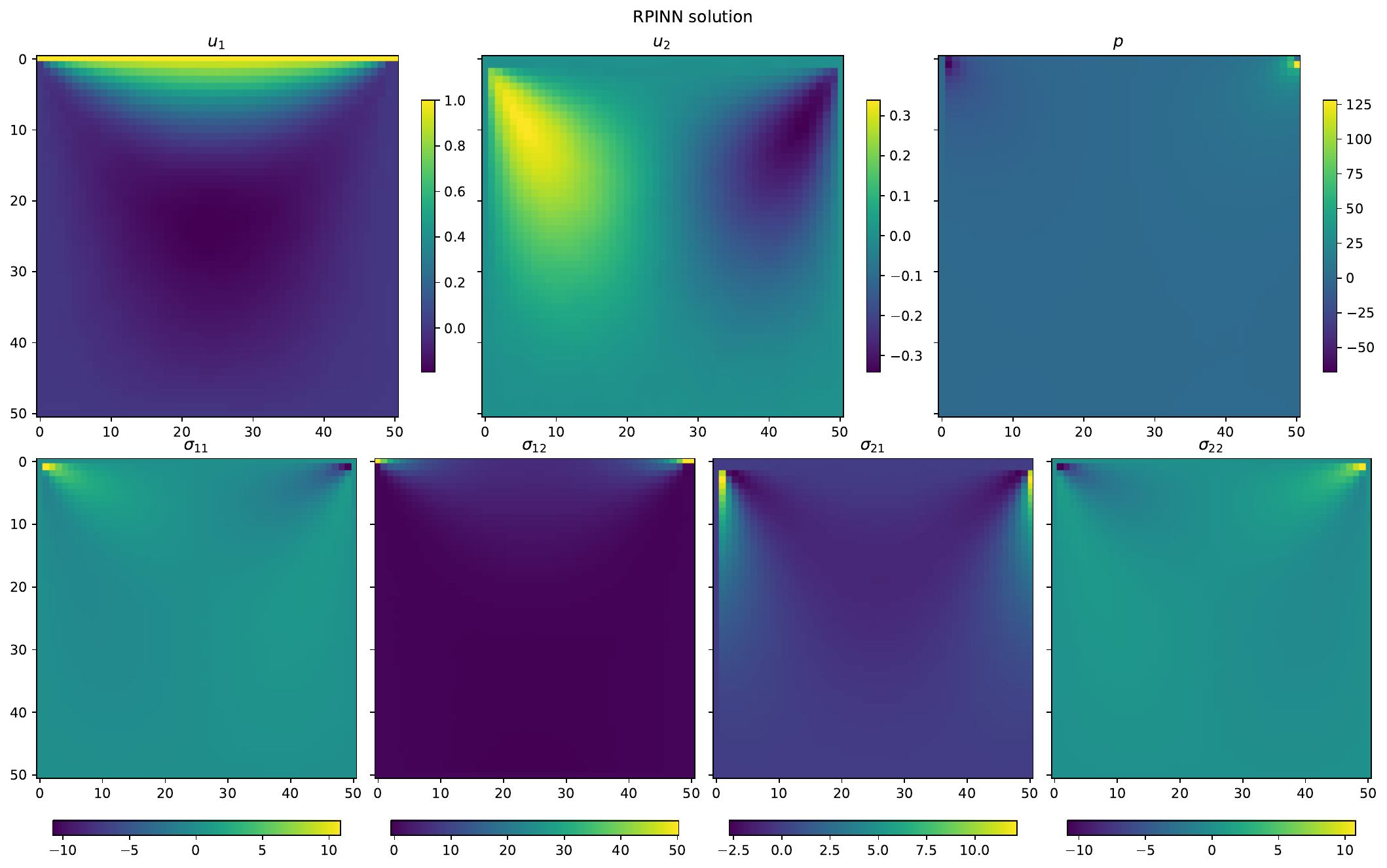}
    \caption{PINN solution for the cavity flow problem after 25,000 training epochs
    on a $50 \times 50$ grid.}
    \label{fig:trained-network}
\end{figure}

\begin{figure}
    \centering
    \includegraphics[width=0.7\linewidth]{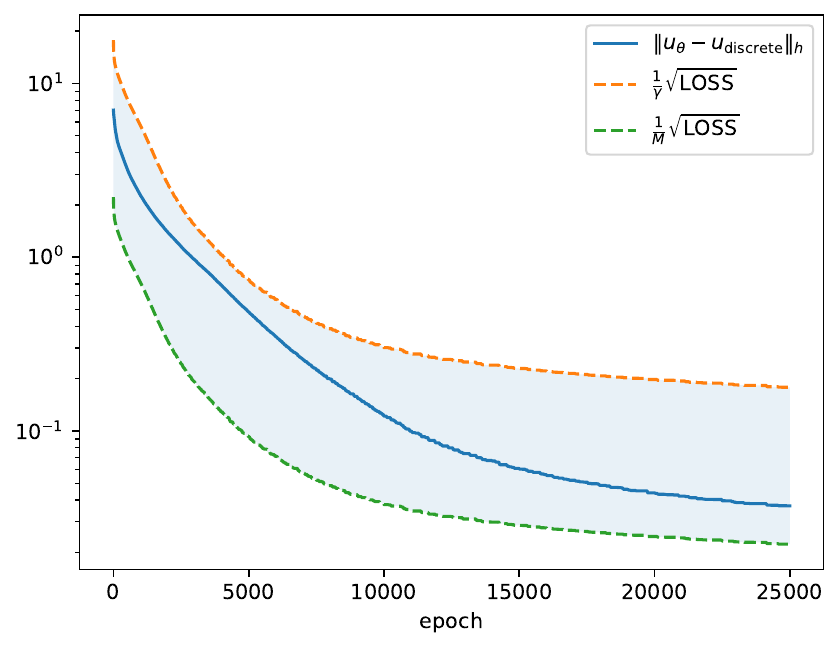}
    \caption{%
    Loss function value and true error of the best solution encountered so far
    throughout the training process
    for the cavity flow problem.}
    \label{fig:training-min}
\end{figure}

\newpage 

\section{Conclusions}

We propose a Python framework for solving PDEs
using discrete variational formulations based on the CRVPINN method. The CRVPINN method employs robust discrete loss function that is related with the true error.
Its goal is to facilitate building the matrices and right-hand side vectors
of such formulations by providing an intuitive, clean, concise, and convenient syntax
for defining weak variational forms,
inspired by FEniCS and the Unified Form Language.
It allows rapid prototyping of new formulations,
and integrates easily into a larger scientific computation ecosystem.
Its use alongside the PyTorch framework is illustrated on two PDEs --
a simple Laplace problem with uniform Dirichlet conditions,
and a more complex Stokes formulation,
showcasing tensor and composite function spaces.
In both cases, the mathematical formulation can be naturally translated
into code, even for relatively complex forms,
such as the ones encountered in the Stokes problem.
The numerical results demonstrate that the method
is capable of delivering accurate results
in a reasonable number of training epochs,
while providing a reliable and efficient error estimate in the form of its loss function.

\medskip
%
%
%

\noindent{\bf Acknowledgments} This work has been supported by the National Science Centre, Poland grant no. 
2025/57/B/ST6/00058. 
This work has received funding from the European Union's Horizon Europe research and innovation programme under the Marie Sklodowska-Curie grant agreement No 101119556.

\bibliographystyle{splncs04}
\bibliography{references}

\end{document}